\theoremstyle{plain}
\newtheorem{theorem}{Theorem}[section]
\newtheorem{lemma}[theorem]{Lemma}
\theoremstyle{definition}
\theoremstyle{remark}
\icmltitlerunning{The Max-Min Formulation of Multi-Objective Reinforcement Learning: From Theory to a Model-Free Algorithm}
\begin{document}

\twocolumn[
\icmltitle{The Max-Min Formulation of Multi-Objective Reinforcement Learning:\\ From Theory to a Model-Free Algorithm}



\icmlsetsymbol{equal}{*}

\begin{icmlauthorlist}
\icmlauthor{Giseung Park}{kor}
\icmlauthor{Woohyeon Byeon}{kor}
\icmlauthor{Seongmin Kim}{kor}
\icmlauthor{Elad Havakuk}{isr}
\icmlauthor{Amir Leshem}{isr}
\icmlauthor{Youngchul Sung}{kor}
\end{icmlauthorlist}

\icmlaffiliation{kor}{School of Electical Engineering, Korea Advanced Institute of Science \& Technology, Daejeon 34141, Republic of Korea}
\icmlaffiliation{isr}{Faculty of Engineering, Bar-Ilan University, Ramat Gan 52900, Israel}

\icmlcorrespondingauthor{Youngchul Sung}{ycsung@kaist.ac.kr}

\icmlkeywords{Machine Learning, ICML}

\vskip 0.3in
]



\printAffiliationsAndNotice{}  

\begin{abstract}
In this paper, we consider multi-objective reinforcement learning, which arises in many real-world problems with multiple optimization goals. We approach the problem with a max-min framework focusing on fairness among the multiple goals and develop a relevant theory and a practical model-free algorithm under the max-min framework. The developed theory provides a theoretical advance in multi-objective reinforcement learning, and the proposed algorithm demonstrates a notable performance improvement over existing baseline methods.
\end{abstract}

\section{Introduction and Motivation}
\label{intro}

Reinforcement Learning (RL) is a powerful machine learning paradigm, focusing on training an agent to make sequential decisions   by interacting with an environment. RL algorithms learn to maximize the cumulative  reward sum  through a trial-and-error process, enabling the agent to adapt and improve its decision-making strategy over time.   Recently, the field of Multi-Objective Reinforcement Learning (MORL) has received increasing attention from the RL community since many practical control problems are formulated as multi-objective optimization.  For example, consider a scenario where an autonomous vehicle must balance the competing goals of reaching its destination swiftly while ensuring passenger safety.  MORL extends traditional RL to address such scenarios in which  multiple, often conflicting, objectives need to be optimized simultaneously \cite{roijers13survey,hayes22survey}.

Formally, a multi-objective Markov decision process (MOMDP) is defined as $<\hspace{-0.3em}S, A, P, \mu_0, r,   \gamma \hspace{-0.3em}>$, where $S$ and $A$ represent the sets of states and actions, respectively, $P: {S \times A} \rightarrow \mathcal{P}(S)$ is the transition probability function where $\mathcal{P}(S)$ is the space of probability distributions over $S$,  $\mu_0: S \rightarrow [0,1]$ represents the initial distribution of  states, and $\gamma \in [0,1)$ is the discount factor.  The key difference from the conventional RL is that ${r}: S \times A \rightarrow \mathbb{R}^K$  is a \textbf{vector-valued} reward function with  $K \geq 2$. 
At each timestep $t$, the agent draws an action $a_t \in A$ based on current state $s_t \in S$ from its policy $\pi: S \rightarrow \mathcal{P}(A)$ which is a probability distribution over $A$. Then, the environment state makes a transition from the current state $s_t$ to the next state $s_{t+1} \in S$  with probability $P(s_{t+1} | s_t, a_t)$, and the agent receives a vector-valued reward ${r}_t = [r_t^{(1)}, \cdots, r_t^{(K)}]^T =  {r}(s_t, a_t) \in \mathbb{R}^K$, where $[\cdot]^T$ denotes the transpose operation.  Let $J(\pi) := \mathbb{E}_\pi \left[ \sum_{t=0}^\infty \gamma^t {r}_t \right] = [J_1(\pi), \cdots, J_K(\pi)]^T \in \mathbb{R}^K$. 

\begin{figure}[!t] \label{fig:pareto_boundary}
\begin{center}
\centerline{\includegraphics[width=0.75\columnwidth]{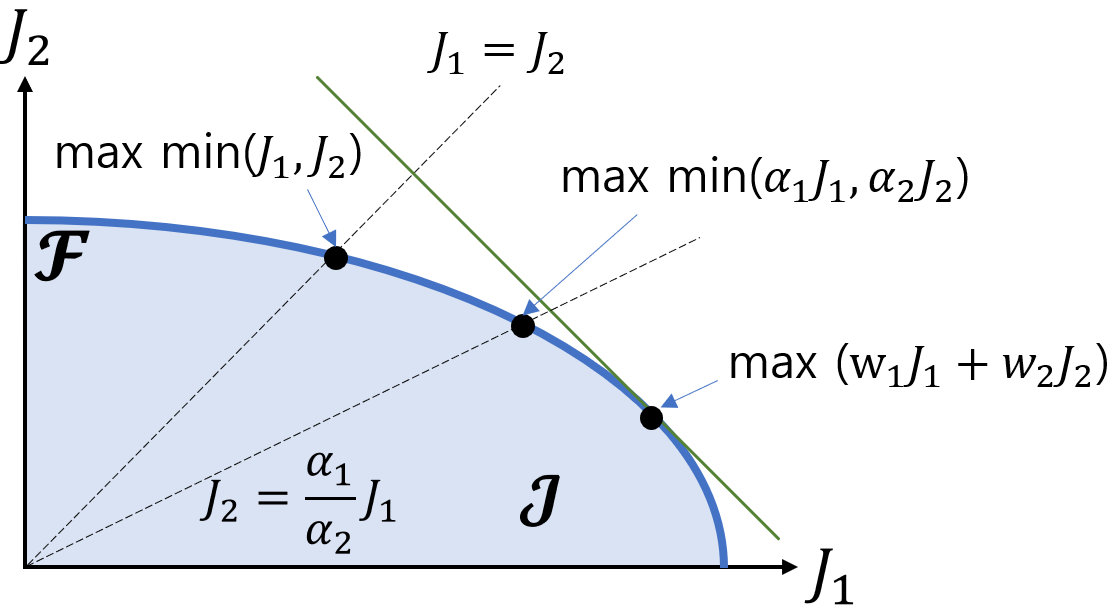}}
\caption{Achievable return region and Pareto boundary ($K=2$): weighted sum versus max-min approaches (Due to the equalizer rule \cite{zehavi2013weighted}, the max-min solution occurs on the line $J_1=J_2$.  On the other hand, the maximum sum $J_1+J_2$ occurs on the tangent line with slope -1. Controlling the ratio $\alpha_1/\alpha_2$, we can recover all points on the Pareto boundary by the max-min approach.) }
\label{pareto_boundary}
\end{center}
\vskip -0.4in
\end{figure}
 
In the standard RL case ($K=1$), the agent's goal is to find an optimal policy $\pi^* = \arg \max_{\pi \in \Pi } J(\pi)$, where $\Pi$ is the set of policies. On the other hand, the primary goal of MORL is to find a policy whose expected cumulative return vector lies on the {\em Pareto boundary} $\mathcal{F}$ of all achievable return tuples $\mathcal{J} = \{ (J_1(\pi),\cdots,J_K(\pi)), \forall \pi \in \Pi\}$, which is defined as the set of the return tuples in $\mathcal{J}$ for which any one $J_i$ cannot be increased without decreasing 
 $J_j$ for some $j\ne i$. Fig. \ref{fig:pareto_boundary} depicts $\mathcal{J}$ and $\mathcal{F}$.
 
A standard way to find such a policy is the utility-based approach \cite{roijers13survey}, which is formulated as the following policy optimization: $\pi^* = \arg \max_{\pi} f(J(\pi))$ such that $J(\pi^*) \in \mathcal{F}$, where the scalarization function $f: \mathbb{R}^K \rightarrow \mathbb{R}$ is non-decreasing function. A common example is the  weighted sum $f(J(\pi)) = \sum_{k=1}^K w_k J_k(\pi)$ with $\sum_k w_k =1, ~w_k \ge 0,\forall k$. In the case of weighted sum, by sweeping the weights $\{w_k\}$, different points of the Pareto boundary $\mathcal{F}$ can be found, as seen in Fig. \ref{fig:pareto_boundary}. However, the main disadvantage of the weighted sum approach is that we do not have a direct control of individual $J_1, \cdots, J_K$ and we may have an unfair case in which a particular $J_i$ is very small even if the weighted sum is maximized with seemingly-proper weights \cite{hayes22survey}.  Such an event depends on the shape of the achievable return region but we do not know the shape beforehand.

In order to address fairness across different dimensions of return $J_1,\cdots, J_K$, we adopt  the egalitarian welfare function $f=\min$ and explicitly formulate the max-min MORL in this paper. Unlike the widely-used weighted sum approach, the max-min approach ensures fairness in optimizing multiple objectives and is widely used in various practical applications such as resource allocation and multi-agent learning. Furthermore, by incorporating weights, the weighted max-min optimization recovers the convex coverage of the Pareto boundary \cite{zehavi2013weighted}, as seen in Fig. \ref{fig:pareto_boundary}. (Please see Appendix \ref{append:wide_use_maxmin} for details on applications and Pareto boundary recovery.)

Our contributions are summarized below:  

1) We developed a relevant theory for the max-min MORL based on a linear programming approach to RL. We show that the max-min MORL can be formulated as a joint optimization of the value function and a set of weights.
    
2) We introduced an entropy-regularized convex optimization approach to the max-min MORL which produces the max-min policy without ambiguity.
    
3) We proposed a practical model-free MORL algorithm that outperforms baseline methods in the max-min sense for the considered multi-objective tasks.

\section{Value Iteration as Linear Programming}
\label{background}

In standard RL with a scalar reward function $r$ with $K=1$, denoted as $r(s,a) \in \mathbb{R}$, $\forall (s,a) $, the Bellman optimality equation for the optimal value function $v^*$ is defined as 
\begin{equation}  \label{eq:VsBOE}
v^*(s) = \max_{a \in A} \left[ r(s,a) + \gamma \sum_{s' \in S} P(s'|s,a) v^*(s') \right], \forall s.
\end{equation}
Value iteration employs the Bellman optimality operator $T^*$ to compute $v^*$, which is expressed as $T^* v(s) := \max_{a \in A} \left[ r(s,a) + \gamma \sum_{s' \in S} P(s'|s,a) v(s') \right], \forall s$.

Interestingly, the optimal value function $v^*$ can be obtained by solving the following Linear Programming (LP) \cite{Puterman_2005}:
\begin{equation} \label{eq:equiv_lp_1}
    \min_{v} \sum_{s \in S} \mu_0(s) v(s)  
\end{equation}
subject to
\begin{equation} \label{eq:equiv_lp_2}
     v(s) \geq r(s,a) + \gamma \sum_{s' \in S}P(s'|s,a)v(s'), ~~\forall (s,a) .
\end{equation}
The LP seeks to minimize a linear combination of state values while satisfying constraints that mirror the Bellman optimality equation. When the dual transform of the LP  \eqref{eq:equiv_lp_1} and \eqref{eq:equiv_lp_2} is taken, it yields the following dual form:
\begin{equation} \label{eq:dual_lp_1}
    \max_{d}  \sum_{s,a } r(s,a) d(s,a)
\end{equation}
subject to
\begin{equation} \label{eq:dual_lp_2}
    \sum_{a' \in A} d(s',a') = \mu_0(s') + \gamma \sum_{s,a } P(s' | s,a) d(s,a), ~ \forall s'.
\end{equation}
\begin{equation} \label{eq:dual_lp_3}
    d(s,a) \geq 0, ~ \forall (s,a) .
\end{equation}
Note that \eqref{eq:dual_lp_2} is the balance equation for  the  (unnormalized) state-action visitation frequency \cite{sutton2018reinforcement}.  Hence, the dual variable $d(s,a)$  satisfying  \eqref{eq:dual_lp_2} and \eqref{eq:dual_lp_3} is equivalent to  an (unnormalized) state-action visitation frequency.   This frequency or distribution is independent of the rewards $r(s,a)$ and is expressed as $d(s,a) = \sum_{s'} \mu_0(s') \sum_{t=0}^\infty \gamma^t \text{Pr}( S_t = s, A_t = a |S_0 = s', \pi^d)$, where 
\begin{equation}  \label{eq:piddstar}
\pi^d(a|s) = \frac{d(s,a)}{\sum_{a'}d(s,a')}
\end{equation}
is the  stationary Markov policy induced by $d$ \cite{Puterman_2005}. Due to one-to-one mapping between $d$ and corresponding policy $\pi^d$, an optimal policy can be obtained from an optimal distribution $d^*$ from   (\ref{eq:dual_lp_1}, \ref{eq:dual_lp_2},  \ref{eq:dual_lp_3})
\cite{Puterman_2005}.

\section{Max-Min MORL with LP Formulation}
\label{method}

\subsection{Max-Min MORL Formulation}

The main problem considered in  this paper is the following   max-min MORL problem:
\begin{equation}  \label{eq:originalProblem}
    \max_{\pi \in \Pi}  \min_{1 \leq k \leq K} J_k(\pi), ~~~\mbox{where}~~ K \geq 2.
\end{equation}
Due to the non-linearity of the min operation, the above optimization problem cannot be solved directly \cite{roijers13survey} like the weighted sum case in which  we simply apply the conventional scalar reward RL methods to the weighted sum reward $\sum_{k=1}^K w_k r^{(k)}$.  To circumvent the difficulty in handling the min operation, we exploit the state-action visitation frequency $d(s,a)$ in (\ref{eq:dual_lp_2}) and (\ref{eq:dual_lp_3}). Note that this frequency is independent of the reward function and represents the relative frequency (or stationary distribution) of $(s,a)$ in the trajectory. 
Then, the  max-min 
 problem can equivalently be expressed as 
\begin{equation} \label{eq:maxmin_obj}
\textbf{P0}:     ~~~\max_{d} \min_{1 \leq k \leq K}  \sum_{s,a} d(s,a) r^{(k)}(s,a) 
\end{equation}
\begin{equation} \label{eq:maxmin_transition_eq}
    \sum_{a'} d(s',a') = \mu_0(s') + \gamma \sum_{s,a} P(s' | s,a) d(s,a) ~~ \forall s'
\end{equation}
\begin{equation} \label{eq:maxmin_transition_pos}
    d(s,a) \geq 0, ~~ \forall (s,a). 
\end{equation}
This formulation is valid due to the existence of an optimal stationary policy for any non-decreasing scalarization function \cite{roijers13survey}. 

The problem \textbf{P0} can be reformulated as an LP named \textbf{P0-LP} by using a slack variable to handle the min operation (please see Appendix \ref{append:a_duality}).  By solving the LP \textbf{P0-LP} equivalent to \textbf{P0}, we obtain $d^*(s,a)$ and an optimal policy from \eqref{eq:piddstar}.
However, solving \textbf{P0-LP}  requires prior knowledge of $r^{(k)}(s,a)$ and $P(s'|s,a)$. The main question of this paper is ``Can we find the max-min solution in a \textbf{model-free} manner without knowing the model $r^k(s,a)$ and $P(s'|s,a)$?"  In the following, we develop a relevant theory and propose a practical model-free max-min MORL algorithm.

To achieve this, we first convert \textbf{P0} into an LP  \textbf{P1}, which is the dual form of   \textbf{P0-LP} (for detailed derivation, please refer to Appendix \ref{append:a_duality}):

\begin{equation} \label{eq:target_LP_1}
    \textbf{P1}: ~ \min_{w \in \Delta^K, v} \sum_s \mu_0(s) v(s)
\end{equation}
\begin{equation} \label{eq:target_LP_2}
     v(s) \geq \sum_{k=1}^K w_k r^{(k)}(s,a) + \gamma \sum_{s'}P(s'|s,a)v(s'), ~ \forall (s,a) 
\end{equation}
where $\Delta^K := \{ w =[w_1,\cdots,w_K]^T\in \mathbb{R}^K | \sum_{k=1}^K w_k = 1; ~ w_k \geq 0, ~~ \forall k  \}$ is the $(K-1)$-simplex. Note that when $K=1$, \textbf{P1} simplifies to the LP (\ref{eq:equiv_lp_1}) and (\ref{eq:equiv_lp_2}) equivalent to value iteration in standard RL. Note also that $w$ does not appear in the optimization cost in (\ref{eq:target_LP_1}), but  appears  in the constraints in (\ref{eq:target_LP_2}). Hence, $w$ affects the feasible set of $v(s)$ and thus affects the cost through the feasible set of $v(s)$.   If we fix the weight vector $w \in \Delta^K$, the solution to \eqref{eq:target_LP_1} and \eqref{eq:target_LP_2} for $v$ corresponds to the result of value iteration using the scalarized reward function $\sum_{k=1}^K w_k r^{(k)}(s,a)$. Therefore, the feasible set of \textbf{P1} is non-empty. Let $(w_{LP}^{op},v_{LP}^{op})$ be the solution of \textbf{P1}.

\subsection{Equivalent Convex Optimization} \label{subsec:P1_to_P2}

If we insert the weight $w = w^{op}_{LP}$ in \textbf{P1}, the corresponding LP is reformulated as the following equivalent value iteration by  the relationship between (\ref{eq:VsBOE}) and (\ref{eq:equiv_lp_1}, \ref{eq:equiv_lp_2}): 
\begin{equation} \label{eq:equivalence_observation}
    v(s) \hspace{-0.1em}=\hspace{-0.1em} \max_a [\sum_{k=1}^K  \hspace{-0.1em}w^{op}_{LP, k} r^{(k)}\hspace{-0.1em}(s,a) \hspace{-0.1em}+\hspace{-0.1em} \gamma \hspace{-0.1em}\sum_{s'}\hspace{-0.1em}P(s'|s,a)v(s')],  \forall s
\end{equation}
where $v^{op}_{LP}$ should be the solution. Therefore, $v^{op}_{LP}$ is the unique fixed point attained by value iteration with the optimally scalarized reward function $\sum_k w^{op}_{LP, k} r^{(k)}(s,a)$.

Inspired by this observation, we define the following Bellman optimality operator $T^*_{w}$ for  a given  weight vector $w \in \mathbb{R}^K$ as 
\begin{equation} \label{eq:w_bellman_optimality_operator}
    T^*_{w} v(s) \hspace{-0.1em}:=\hspace{-0.1em} \max_a\hspace{-0.1em} \left[ \hspace{-0.1em}\sum_{k=1}^K \hspace{-0.1em}w_k r^{(k)}(s,a) \hspace{-0.1em}+\hspace{-0.1em} \gamma \sum_{s'} P(s'|s,a) v(s') \right]
\end{equation}
$\forall s.$
Let  $v^{*}_{w}$, which is a function of $w$, be the unique fixed point of the mapping $T^*_{w}$.

We now consider the following problem: 
\begin{equation} \label{eq:equivalent_convex}
    \textbf{P2}: ~ \min_{w \in \Delta^K} \mathcal{L}(w) = \min_{w \in \Delta^K}\sum_s \mu_0(s) v^{*}_{w}(s)
\end{equation}
where $\Delta^K$ is the $(K-1)$-simplex. In Theorem \ref{thm_1}, we show that \textbf{P2} is a convex optimization. Let $w^*$ be an optimal solution of \eqref{eq:equivalent_convex} whose existence is guaranteed by Theorem \ref{thm_1} and the fact that $\mathcal{L}(w)$ is continuous on $\Delta^K$\cite{rockafellar1997convex}. In Theorem \ref{thm_2}, we show that \textbf{P1} and \textbf{P2} have the same optimal value, and $(w^*, v^{*}_{w^*})$ is an optimal solution of \textbf{P1}. These steps are the milestones for devising our model-free algorithm in Section \ref{sec:model_free}.

\begin{theorem}\label{thm_1}
    For each $s$, $v^{*}_{w}(s)$ is a convex function in $w \in \mathbb{R}^K$. Consequently, the objective function $\mathcal{L}(w) = \sum_s \mu_0(s) v^{*}_{w}(s)$ is also convex in $w \in \mathbb{R}^K$.
\end{theorem}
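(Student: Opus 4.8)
The plan is to establish convexity of $v^{*}_{w}(s)$ in $w$ by exploiting that $v^{*}_{w}$ is the unique fixed point of the $\gamma$-contraction $T^{*}_{w}$, hence the uniform limit of its iterates, and that each application of $T^{*}_{w}$ preserves convexity in $w$ (pointwise in $s$). The key structural observation is the following: fix a state $s$ and suppose $v(s')$ is convex in $w$ for every $s'$. For each fixed action $a$, the term $\sum_{k=1}^{K} w_k r^{(k)}(s,a)$ is linear in $w$, and $\gamma \sum_{s'} P(s'|s,a) v(s')$ is a nonnegative combination of the convex functions $v(s')$ (since $\gamma \ge 0$ and $P(s'|s,a) \ge 0$), so the bracketed expression in \eqref{eq:w_bellman_optimality_operator} is convex in $w$. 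Then $T^{*}_{w} v(s)$, being the pointwise maximum over the finite action set $A$ of these convex functions, is again convex in $w$.

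Concretely, I would fix an arbitrary bounded initialization $v_0$ that does not depend on $w$, so $v_0(s)$ is trivially convex in $w$ for every $s$, set $v_{n+1} := T^{*}_{w} v_n$, and prove by induction on $n$ that $v_n(s)$ is convex in $w \in \mathbb{R}^K$ for every $s$; the base case is immediate and the inductive step is exactly the structural observation above. Because each $r^{(k)}$ is bounded and $S, A$ are finite, $\sum_k w_k r^{(k)}$ is a bounded reward function for every $w \in \mathbb{R}^K$, so $T^{*}_{w}$ is a $\gamma$-contraction in the sup-norm on all of $\mathbb{R}^K$ (not just on $\Delta^K$), and $v_n \to v^{*}_{w}$ uniformly; in particular, for each fixed $s$, $v_n(s) \to v^{*}_{w}(s)$ as a pointwise limit in $w$. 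Since a pointwise limit of convex functions is convex, $v^{*}_{w}(s)$ is convex in $w \in \mathbb{R}^K$ for every $s$.

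For the consequence, since $\mu_0(s) \ge 0$ for all $s$, $\mathcal{L}(w) = \sum_s \mu_0(s) v^{*}_{w}(s)$ is a nonnegative combination of functions convex in $w$, hence convex in $w \in \mathbb{R}^K$, which finishes the proof.

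I do not anticipate a real obstacle: the argument is just ``convexity is preserved by nonnegative linear combinations, by pointwise maxima, and by pointwise limits,'' threaded through value iteration. The only points needing a line of care are (i) confirming $v^{*}_{w}$ is well defined and finite for all $w \in \mathbb{R}^K$, which follows from boundedness of the scalarized reward and the contraction property, and (ii) invoking the elementary fact that a pointwise limit of convex functions is convex. As a cross-check and an alternative route that avoids the limiting argument altogether, one may write $v^{*}_{w}(s) = \max_{\pi} \sum_{k=1}^{K} w_k v^{\pi,(k)}(s)$, where $v^{\pi,(k)}$ is the $k$-th value function of a stationary deterministic policy $\pi$ and the maximum is over the finite set of such policies (using the existence of an optimal stationary deterministic policy for the scalarized MDP); each summand is linear in $w$, so $v^{*}_{w}(s)$ is a finite maximum of affine functions and therefore piecewise linear and convex.
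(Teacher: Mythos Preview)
Your proof is correct and follows essentially the same approach as the paper: both establish convexity of the value-iteration iterates in $w$ by induction and then pass to the limit, with the paper writing out the Jensen-type inequality $(T^*_{\bar{w}_\lambda})^p v \leq \lambda (T^*_{w^1})^p v + (1-\lambda)(T^*_{w^2})^p v$ explicitly (via the argmax action) rather than invoking closure of convex functions under nonnegative combinations and pointwise maxima. Your alternative route, expressing $v^*_w(s) = \max_\pi \sum_k w_k v^{\pi,(k)}(s)$ as a finite maximum of affine functions, is precisely the argument the paper uses separately to prove the stronger Theorem~\ref{thm_3} (piecewise linearity).
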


\textit{Proof sketch.} 
For $0 \leq \lambda \leq 1$ and $w^1, w^2 \in \mathbb{R}^K$, let $\bar{w}_{\lambda} := \lambda w^1 + (1-\lambda) w^2$, and set $v:S\to\mathbb{R}$ arbitrary. We show that for any positive integer $p \geq 1$,
\begin{equation}
    (T^*_{\bar{w}_{\lambda}})^p v \leq \lambda (T^*_{w^1})^p v + (1-\lambda)(T^*_{w^2})^p v.
\end{equation}
(Please see  Appendix \ref{append:convexity} for  full derivation.) By letting $p \rightarrow \infty$, i.e., applying $T^*_{\bar{w}_{\lambda}}$ infinitely many times,  we obtain $v^*_{\bar{w}_{\lambda}}(s) \leq \lambda v^*_{w^1}(s) + (1-\lambda) v^*_{w^2}(s), ~ \forall s$. Then the objective function $\mathcal{L}(w) = \sum_s \mu_0(s) v^{*}_{w}(s)$ is also convex for $w \in \mathbb{R}^K$.

Since $\mathcal{L}(w)$ is convex,  $\mathcal{L}(w)$ is continuous with respect to  $w$ \cite{rockafellar1997convex} and the minimum value exists on $\Delta^K$.

\begin{theorem}\label{thm_2}
    Let $p^{op}_{LP} = \sum_{s} \mu_0(s) v^{op}_{LP}(s)$ be the  value of an optimal solution $(w^{op}_{LP}, v^{op}_{LP})$  of \textbf{P1} in \eqref{eq:target_LP_1} and \eqref{eq:target_LP_2}. 
Let $w^*$ be an optimal solution of \textbf{P2} in   \eqref{eq:equivalent_convex}. Then, \textbf{P1} and \textbf{P2} have the same optimal value (i.e., $p^{op}_{LP} = \mathcal{L}(w^*)$). In addition, $(w^*, v^{*}_{w^*})$ is an optimal solution of \textbf{P1} and $w^{op}_{LP}$ is an optimal solution of \textbf{P2}.
\end{theorem}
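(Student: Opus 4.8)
The plan is to split the joint minimization in \textbf{P1} over the pair $(w,v)$ into an outer minimization over $w \in \Delta^K$ and an inner minimization over $v$ with $w$ held fixed, and to show that the inner value equals $\mathcal{L}(w)$. First I would fix $w \in \Delta^K$ and note that the constraints \eqref{eq:target_LP_2} say precisely that $v(s) \geq \max_a[\sum_k w_k r^{(k)}(s,a) + \gamma\sum_{s'}P(s'|s,a)v(s')] = T^*_w v(s)$ for all $s$, i.e. the feasible $v$'s for this $w$ are exactly the super-solutions of $T^*_w$ from \eqref{eq:w_bellman_optimality_operator}. Since $v^*_w = T^*_w v^*_w$, the vector $v^*_w$ is feasible. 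Conversely, for any feasible $v$ we have $v \geq T^*_w v$ componentwise, and applying the monotone operator $T^*_w$ repeatedly gives a non-increasing sequence $v \geq T^*_w v \geq (T^*_w)^2 v \geq \cdots$, which converges to $v^*_w$ because $T^*_w$ is a $\gamma$-contraction; hence $v(s) \geq v^*_w(s)$ for every $s$. Because $\mu_0(s) \geq 0$ for all $s$, this yields $\sum_s \mu_0(s) v(s) \geq \sum_s \mu_0(s) v^*_w(s) = \mathcal{L}(w)$, with equality at $v = v^*_w$. Therefore, for each fixed $w \in \Delta^K$, $\min_v \{ \sum_s \mu_0(s) v(s) : \eqref{eq:target_LP_2} \} = \mathcal{L}(w)$.

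Next I would combine the two layers of optimization. Since the feasible set of \textbf{P1} is the disjoint union over $w \in \Delta^K$ of the inner feasible sets,
\[
p^{op}_{LP} \;=\; \min_{w \in \Delta^K}\ \min_{v:\,\eqref{eq:target_LP_2}}\ \sum_s \mu_0(s) v(s) \;=\; \min_{w \in \Delta^K} \mathcal{L}(w),
\]
and by Theorem \ref{thm_1} together with continuity of $\mathcal{L}$ on the compact set $\Delta^K$, this minimum is attained at some $w^*$, giving $p^{op}_{LP} = \mathcal{L}(w^*)$: this is the first claim. For the second claim, $(w^*, v^*_{w^*})$ is feasible for \textbf{P1} by the feasibility of $v^*_{w^*}$ shown above, and its objective value is $\sum_s \mu_0(s) v^*_{w^*}(s) = \mathcal{L}(w^*) = p^{op}_{LP}$; since this equals the optimal value, $(w^*, v^*_{w^*})$ is an optimal solution of \textbf{P1}. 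For the third claim, $v^{op}_{LP}$ is feasible for the inner problem at $w = w^{op}_{LP}$ (because $(w^{op}_{LP}, v^{op}_{LP})$ is feasible for \textbf{P1}), so $\mathcal{L}(w^{op}_{LP}) \le \sum_s \mu_0(s) v^{op}_{LP}(s) = p^{op}_{LP} = \min_{w \in \Delta^K}\mathcal{L}(w)$; the reverse inequality is immediate, hence $w^{op}_{LP}$ minimizes $\mathcal{L}$, i.e. it is optimal for \textbf{P2}.

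The only delicate point is the inner minimization: one cannot simply argue that the linear objective is minimized by taking each $v(s)$ as small as possible, since $\mu_0$ may vanish on some states. The correct ingredient is the componentwise domination $v \ge v^*_w$ for every feasible $v$, which follows from monotonicity and the contraction property of $T^*_w$ — the same ingredients underlying the classical LP form of value iteration in \eqref{eq:equiv_lp_1}--\eqref{eq:equiv_lp_2}. Everything else is bookkeeping with the two nested minimizations; the already-noted identity $v^{op}_{LP} = v^*_{w^{op}_{LP}}$ behind \eqref{eq:equivalence_observation} could shortcut the third claim, but the self-contained inner-minimization argument handles all three claims uniformly.
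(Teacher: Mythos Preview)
Your proof is correct and follows essentially the same strategy as the paper's: both establish $p^{op}_{LP}=\min_{w\in\Delta^K}\mathcal{L}(w)$ by combining the feasibility of $(w^*,v^*_{w^*})$ in \textbf{P1} with the fact that, for fixed $w$, the inner LP over $v$ has value $\mathcal{L}(w)$. The only difference is presentational: the paper invokes \eqref{eq:equivalence_observation} and the classical LP--value-iteration equivalence to get $v^{op}_{LP}=v^*_{w^{op}_{LP}}$ directly, whereas you reprove the inner-minimization identity via the monotone-contraction argument $v\ge T^*_w v\ge (T^*_w)^2 v\ge\cdots\to v^*_w$, which makes your version slightly more self-contained (and, as you note, robust to $\mu_0$ vanishing on some states).
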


\begin{proof}
    The optimal value of \textbf{P2} is $\mathcal{L}(w^*) = \sum_s \mu_0(s) v^{*}_{w^*}(s)$. 
    \begin{enumerate}
        \item From \eqref{eq:equivalence_observation} we have $v^{op}_{LP} = T^*_{w^{op}_{LP}} v^{op}_{LP} =  v^*_{w^{op}_{LP}}$ with $w^{op}_{LP} \in \Delta^K$ (recall $v^*_w$ is defined as the fixed point of $T^*_w$). Therefore, $p^{op}_{LP} = \sum_{s} \mu_0(s) v^*_{w^{op}_{LP}}(s) = \mathcal{L}(w^{op}_{LP}) \geq \mathcal{L}(w^*)$ by the definition of $w^*$.
        \item There exists the unique $v^{*}_{w^*}$ satisfying $T^*_{w^*}v^{*}_{w^*} = v^{*}_{w^*}$ for the mapping $T^*_{w^*}$ in \eqref{eq:w_bellman_optimality_operator} since $T^*_{w^*}$ is a contraction. Since $w^* \in \Delta^K$ and $(w^*, v^{*}_{w^*})$ satisfies  \eqref{eq:target_LP_2} due to the equivalence between (\ref{eq:target_LP_1},    \ref{eq:target_LP_2}) and  \eqref{eq:equivalence_observation}, $(w^*, v^{*}_{w^*})$ is feasible in the LP \textbf{P1} and  $\mathcal{L}(w^*) = \sum_s \mu_0(s) v^{*}_{w^*}(s) \geq p^{op}_{LP} = \mathcal{L}(w^{op}_{LP})$.
    \end{enumerate}
    Therefore, $\mathcal{L}(w^*) = p^{op}_{LP}$; $(w^*, v^{*}_{w^*})$ is an optimal solution of \textbf{P1}; and $w^{op}_{LP}$ is an optimal solution of \textbf{P2}.
\end{proof}

Another property of  $\mathcal{L}(w)$ is the following piecewise-linearity. 

\begin{theorem}\label{thm_3}
    For each $s$, $v^{*}_{w}(s)$ is a piecewise-linear function in $w \in \mathbb{R}^K$. Consequently, the objective $\mathcal{L}(w) = \sum_s \mu_0(s) v^{*}_{w}(s)$ is also  piecewise-linear in $w \in \mathbb{R}^K$. 
\end{theorem}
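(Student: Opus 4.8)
The plan is to reduce the statement to the elementary fact that a pointwise maximum of finitely many affine functions of $w$ is piecewise-linear. First I would use the finiteness of $S$ and $A$ (implicit in the LP formulation), so that the set $\Pi_{\mathrm{det}}$ of deterministic stationary policies is finite, with $|\Pi_{\mathrm{det}}| = |A|^{|S|}$. For a fixed $\pi \in \Pi_{\mathrm{det}}$, let $P^\pi$ be the induced transition matrix and let $r^\pi_w$ be the vector with entries $\sum_{k=1}^K w_k r^{(k)}(s,\pi(s))$. The policy value function solves $v^\pi_w = r^\pi_w + \gamma P^\pi v^\pi_w$, hence $v^\pi_w = (I-\gamma P^\pi)^{-1} r^\pi_w$; since $\gamma<1$ makes $(I-\gamma P^\pi)$ invertible and $(I-\gamma P^\pi)^{-1}$ does not depend on $w$, while $r^\pi_w$ is linear in $w$, each component $v^\pi_w(s)$ is a linear function of $w \in \mathbb{R}^K$.

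Second, I would invoke the standard fact that for any $w \in \mathbb{R}^K$ there is a deterministic stationary policy that is optimal at all states simultaneously for the scalarized MDP with reward $\sum_k w_k r^{(k)}$, so that $v^*_w(s) = \max_{\pi \in \Pi_{\mathrm{det}}} v^\pi_w(s)$ for every $s$; this needs only $\gamma<1$, not $w \in \Delta^K$, so it applies on all of $\mathbb{R}^K$. Combining the two steps, $v^*_w(s)$ is the pointwise maximum of the finite family $\{v^\pi_w(s)\}_{\pi \in \Pi_{\mathrm{det}}}$ of linear functions of $w$, which is by definition piecewise-linear (and convex, consistently with Theorem~\ref{thm_1}), the finitely many linear pieces being glued along the polyhedral regions on which a given $\pi$ is optimal. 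For the objective, since one $\pi$ is optimal at all states we may write $\mathcal{L}(w) = \sum_s \mu_0(s) v^*_w(s) = \max_{\pi \in \Pi_{\mathrm{det}}} \sum_s \mu_0(s) v^\pi_w(s)$, again a pointwise maximum of finitely many linear functions of $w$, hence piecewise-linear (alternatively, $\mathcal{L}$ is a finite $\mu_0$-weighted sum of piecewise-linear functions, which is piecewise-linear by taking a common polyhedral refinement).

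The main obstacle is choosing the right route rather than any single hard computation. The approach mirroring the proof of Theorem~\ref{thm_1} — show by induction on $p$ that $(T^*_w)^p v$ is piecewise-linear in $w$ and then let $p \to \infty$ — is problematic, because the number of linear pieces of $(T^*_w)^p v$ can grow without bound in $p$, and a pointwise limit of piecewise-linear functions need not be piecewise-linear, so the limiting step does not close cleanly. The policy-based argument avoids this by exhibiting $v^*_w$ directly as a finite maximum, which also bounds the number of pieces by $|\Pi_{\mathrm{det}}|$. Beyond this, the only care needed is to fix a precise notion of ``piecewise-linear'' (a finite cover of $\mathbb{R}^K$ by polyhedra on each of which the function is affine) and to note, as above, that the argument is insensitive to whether $w$ lies in $\Delta^K$ or in all of $\mathbb{R}^K$.
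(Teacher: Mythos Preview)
Your proposal is correct and rests on the same core mechanism as the paper's proof: for each deterministic stationary policy $\pi$, the value $v^\pi_w = (I-\gamma P^\pi)^{-1} r^\pi_w$ is linear in $w$, and there are only $|A|^{|S|}$ such policies. The paper's proof differs only in presentation. Rather than invoking the identity $v^*_w(s) = \max_{\pi\in\Pi_{\mathrm{det}}} v^\pi_w(s)$ as a known fact, it partitions $\mathbb{R}^K$ explicitly into at most $|A|^{|S|}$ regions $D_{i_1}(s_1)\cap\cdots\cap D_{i_{|S|}}(s_{|S|})$ according to which action attains the maximum in the Bellman optimality equation at each state, and on each such region solves the resulting linear system to obtain $v^*_w = (I-\gamma B)^{-1} R\, w$ --- which is precisely $v^\pi_w$ for the policy $\pi(s_j)=a_{i_j}$. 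Your route is more direct and, as you note, simultaneously recovers the convexity of Theorem~\ref{thm_1}; the paper's route is slightly more self-contained in that it extracts the optimal deterministic policy from the Bellman equation rather than citing its existence. Both yield the same bound of $|A|^{|S|}$ on the number of linear pieces, and your observation that the inductive $(T^*_w)^p$ approach would not close cleanly in the limit is well taken.
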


\begin{proof}
    Appendix \ref{append:piecewise_linear}.
\end{proof}

\section{Regularization for Max-Min Policy}\label{sec:regularize_maxent}

Suppose we have acquired the optimal $w^*$ from \textbf{P2} and the corresponding optimal action value function  $Q^*_{w^*}$ with optimal scalarization weight $w^*$ such that $\forall s, v^*_{w^*}(s) = \max_a Q^*_{w^*}(s,a)$. Then, $\arg \max_a Q^*_{w^*}(s,a)$ is a deterministic policy which attains the max-min value as follows: 
\begin{equation}
    \mathcal{L}(w^*) = \mathbb{E}_{s \sim \mu_0} [\max_a Q^*_{w^*}(s,a)] = \max_{\pi \in \Pi}  \min_{1 \leq k \leq K} J_k(\pi).
\end{equation}
As shown in Section \ref{subsec:example_reg_need}, however,  $\arg \max_a Q^*_{w^*}(s,a), \forall s$ is not necessarily an optimal max-min policy. To resolve this issue,  we propose a regularized version of the max-min formulation, denoted as \textbf{P0'}, to obtain the optimal max-min policy in Section \ref{subsec:reg_maxent_derive}.

\subsection{An Example of Indeterminacy}\label{subsec:example_reg_need}

\begin{figure}[!ht]
    \begin{center}
        \includegraphics[width=0.28\linewidth]{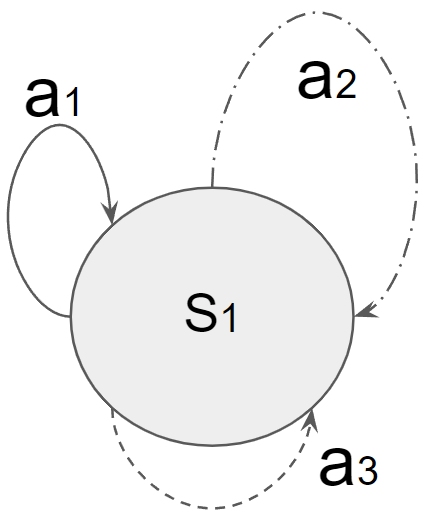} 
        \includegraphics[width=0.71\linewidth]{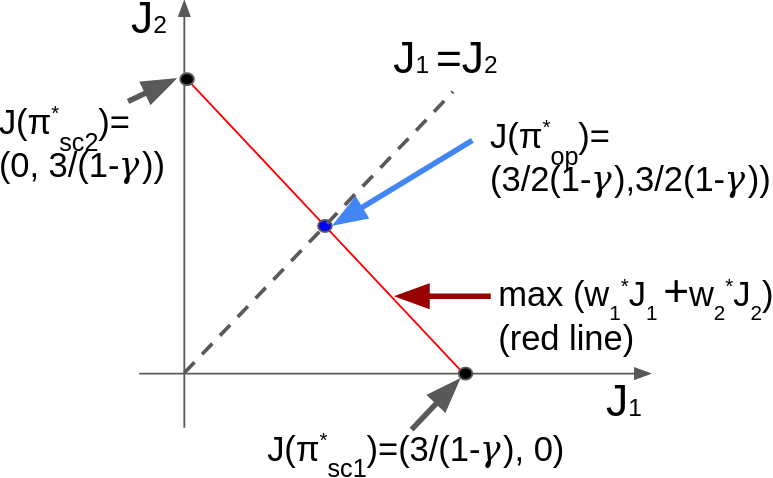}
    \end{center}
    \caption{(Left) one-state example \cite{roijers13survey} and  (Right) cumulative return vectors of $J(\pi_{sc1}^*), J(\pi_{sc2}^*)$, and $\pi^*_{op}$.  }
    \label{fig:one_state}
\end{figure}

Consider the one-state two-objective MDP \cite{roijers13survey} in Fig. \ref{fig:one_state} (Left). Let the initial distribution be $\mu_0(s_1) = 1$ and $0 < \gamma < 1$. We  have reward function $r(s_1, a_{1})=[3,0], r(s_1, a_{2})=[0,3]$, and $r(s_1, a_{3})=[1,1]$.

If we first solve the \textbf{P1} analytically, the exact solution is $v^*(s_1) = \frac{3}{2(1-\gamma)},  w^*_1 = w^*_2 = \frac{1}{2}$. The following policy $\pi^*$ is the optimal policy of MDP whose scalar reward is given by $w^*_1r^{(1)}+w^*_2r^{(2)}$ with $w^* = (w^*_1, w^*_2)=(1/2,1/2)$:
\begin{align} \label{eq:naive_deterministic}
    \pi^*(s_1) &= \mathop{\arg \max}_{a_{i}, 1 \leq i \leq 3} Q^*_{w^*}(s, a_i)  \nonumber \\
    &= \mathop{\arg \max}_{a_{i}} [\frac{3}{2} + \gamma v^*(s_1), \frac{3}{2} + \gamma v^*(s_1), 1 + \gamma v^*(s_1) ] \nonumber \\
    &= a_1 ~ \text{or} ~ a_2.
\end{align}
Let $\pi_{sc1}^*(s_1) = a_1$ and $\pi_{sc2}^*(s_1) = a_2$, both of which are deterministic policies. Then, the corresponding cumulative return vectors are $J(\pi_{sc1}^*) = \left( \frac{3}{1-\gamma}, 0 \right)$ and $J(\pi_{sc2}^*) = \left( 0, \frac{3}{1-\gamma} \right)$, respectively, both of which have the max-min value 0.

On the other hand, the exact solution of the original max-min problem \textbf{P0} is $ d^*(s, a_1) = d^*(s, a_2) = \frac{1}{2(1-\gamma)}, d^*(s, a_3) = 0$. The optimal induced (stochastic) policy is
\begin{equation}
    \pi^*_{op}(a|s_1) =  0.5 ~ \text{for} ~ a = a_1, ~~   0.5 ~ \text{for} ~ a = a_2, ~~ 0  ~ \text{o.w.}
\end{equation}
with the cumulative return $J(\pi^*_{op}) = \left( \frac{3}{2(1-\gamma)}, \frac{3}{2(1-\gamma)} \right)$. 

This example shows that naively solving \textbf{P1} (or equivalent \textbf{P2}) gives the optimal max-min value $\frac{3}{2(1-\gamma)}$ due to the strong duality between \textbf{P0} and \textbf{P1}, but does not necessarily recover the optimal max-min policy of \textbf{P0}. This happens because the primal solution $d^*$ of \textbf{P0} is not explicitly expressed in the solution $(w^*, v^*)$ of the dual problem \textbf{P1} in an 1-to-1 manner in general. Note that any points including $J(\pi^*_{sc1})$, $J(\pi^*_{sc2})$ and $J(\pi_{op}^*)$ on the red line with slope -1  in Fig.      \ref{fig:one_state} (Right) yields the same value for $w^*_1 J_1 + w^*_2 J_2$ with $w^*_1= w^*_2 =1/2$. The max-min point should be simultaneously on this red line with slope -1 and the line $J_1=J_2$.

\subsection{Entropy-Regularized Max-Min Formulation} \label{subsec:reg_maxent_derive}

The indeterminancy of the solution of \textbf{P0} from the solution of \textbf{P1} or \textbf{P2} results from the fact that $d(s,a)$ is not explicitly recovered from the solution of \textbf{P1} or \textbf{P2}. To circumvent this limitation and impose an explicit correspondence of $d^*$ to $(w^*, v^*)$, we add a proper regularization term in \textbf{P0}.  

We choose entropy regularization because (i) the entropy regularization term reformulates \textbf{P0} as a convex optimization, denoted as \textbf{P0'}, (ii) it additionally injects exploration to improve online training \cite{haarnoja2018soft}, and (iii) it is favored over general KL-divergence-based regularization due to its algorithmic simplicity (please see Appendix \ref{append:entropy_regul_simple} for details).

Thus, the new entropy injected problem \textbf{P0'} is given by 
\begin{equation} \label{eq:reg_maxmin_obj}
    \textbf{P0'}: \max_{d} \min_{1 \leq k \leq K}  \sum_{s,a} d(s,a) \left\{ r^{(k)}(s,a) + \alpha \mathcal{H}( \pi^d(\cdot | s)) \right\}  
\end{equation}
\begin{equation} \label{eq:reg_maxmin_transition_eq}
    \sum_{a'} d(s',a') = \mu_0(s') + \gamma \sum_{s,a} P(s' | s,a) d(s,a), ~~ \forall s'
\end{equation}
\begin{equation} \label{eq:reg_maxmin_transition_pos}
    d(s,a) \geq 0, ~~ \forall (s,a) .
\end{equation}
where $\pi^d(a|s) = \frac{d(s,a)}{\sum_{a'}d(s,a')}$ is the policy induced by $d$ \cite{Puterman_2005} and $\mathcal{H}( \pi^d(\cdot | s))$ is its entropy given $s$. Since the objective can be rewritten as $\max_{d} \bigg[ \left\{ \min_{1 \leq k \leq K}  \sum_{s,a} r^{(k)}(s,a) d(s,a) \right\} + \alpha \sum_{s,a}d(s,a) \mathcal{H}(\pi^d(\cdot|s)) \bigg]$ and $\sum_{s,a}d(s,a) \mathcal{H}(\pi^d(\cdot|s))$ is concave regarding $d$ due to the log sum inequality, \textbf{P0'} is a convex optimization. After inserting a slack variable $c=\min_{1 \leq k \leq K}  \sum_{s,a} r^{(k)}(s,a) d(s,a)$, the convex dual problem of \textbf{P0'} is written as follows:
{\small\begin{align} \label{eq:dual_of_maxent_maxmin}
     & \min_{w\geq 0, v} \hspace{-0.1em}\min_{\xi \geq 0} \hspace{-0.1em}\max_{d,c} \hspace{-0.2em}\Bigg[ c(1 - \hspace{-0.1em}\sum_{k=1}^K w_k) \hspace{-0.1em}-\hspace{-0.1em} \alpha \hspace{-0.1em}\sum_{s,a}\hspace{-0.1em} d(s,a) \hspace{-0.1em}\log \frac{d(s,a)}{\sum_{a'} d(s,a')}    \nonumber \\
      & + \hspace{-0.1em}\sum_{s}\hspace{-0.1em}\mu_0(s) v(s) + \sum_{s,a} \xi(s,a)d(s,a) \nonumber \\
      & + \hspace{-0.1em}\sum_{s,a} d(s,a)\hspace{-0.1em} [\sum_{k=1}^K w_k r^{(k)}\hspace{-0.1em}(s,a) \hspace{-0.1em}+\hspace{-0.1em} \gamma \hspace{-0.1em}\sum_{s'} P(s' | s,a) v(s') \hspace{-0.1em}-\hspace{-0.1em} v(s)  ] \Bigg].
\end{align}}
If we apply the stationarity condition to the Lagrangian $L$ which is the whole term in the large brackets in \eqref{eq:dual_of_maxent_maxmin}, we have $\frac{\partial L}{\partial c} = 1 - \sum_{k=1}^K w_k = 0$ and $\forall (s,a) $,
\begin{equation} \label{eq:kkt_stationarity}
    \frac{\partial L}{\partial d(s,a)} = -  \alpha \log \frac{d(s,a)}{\sum_{a'} d(s,a')}  + \xi(s,a) + \eta_{v,w}(s,a) = 0
\end{equation}
where $\eta_{v,\hspace{-0.1em}w}(s,a) \hspace{-0.2em}= \sum_{k} \hspace{-0.1em}w_k r^{(k)}\hspace{-0.1em}(s,a) \hspace{-0.2em}+\hspace{-0.3em} \gamma  \hspace{-0.2em}\sum_{s'} \hspace{-0.2em}P(s' | s,a) v(s')$ $- v(s)$.  \eqref{eq:kkt_stationarity} imposes the explicit connection between $d$ and $(w, v)$. Note that as $\alpha \rightarrow 0$, the connection between $d$ and $(w, v)$ vanishes in \eqref{eq:kkt_stationarity}, and  the convex dual problem \eqref{eq:dual_of_maxent_maxmin} of \textbf{P0'} reduces to  the dual problem  of \textbf{P0-LP}. 

Since $\frac{d(s,a)}{\sum_{a'} d(s,a')} = \exp \left( \frac{\xi(s,a) + \eta_{v,w}(s,a)}{\alpha} \right) > 0$ from (\ref{eq:kkt_stationarity}), we have $\xi(s,a)=0$ due to the complementary slackness condition $d(s,a)\xi(s,a)=0$. After plugging  $\frac{d(s,a)}{\sum_{a'} d(s,a')} = \exp \left( \frac{\eta_{v,w}(s,a)}{\alpha} \right)$ into \eqref{eq:dual_of_maxent_maxmin} and some manipulation, the problem \eqref{eq:dual_of_maxent_maxmin} reduces to the following problem:
\begin{equation} \label{eq:joint_dual_maxent_maxmin_obj}
    \textbf{P1'}: ~~ \min_{w \in \Delta^K, v} \sum_s \mu_0(s) v(s)
\end{equation}
{\small 
\begin{equation} \label{eq:joint_dual_maxent_maxmin}
     v(s) \hspace{-0.2em} = \hspace{-0.2em}\alpha \log \hspace{-0.2em}\sum_a \hspace{-0.2em}\exp [ \frac{1}{\alpha}  \{ \sum_{k=1}^K \hspace{-0.2em}w_k r^{(k)}\hspace{-0.2em}(s,a) + \gamma \hspace{-0.3em}\sum_{s'}\hspace{-0.2em}P(s'|s,a)v(s') \} ]
\end{equation}}where $\Delta^K$ is the $(K-1)$-simplex.  If we solve  \textbf{P1'} and find an optimal solution $(w^*, v^*)$, due to the strong duality under Slater condition \cite{boyd2004convex}, we directly recover the induced optimal policy of \textbf{P0'} as 
{\small\begin{equation} \label{eq:soft_optimal_policy}
   \pi^{*}(a|s) \hspace{-0.1em}=\hspace{-0.1em} \pi^{d^*}\hspace{-0.1em}(a|s)\hspace{-0.1em} = \hspace{-0.1em}\frac{d^*(s,a)}{\sum_{a'} d^*(s,a')} \hspace{-0.1em} = \hspace{-0.1em} \exp \hspace{-0.1em}\left( \frac{\eta_{v^*,w^*}(s,a)}{\alpha} \right).
\end{equation}}The only difference between \textbf{P1} and \textbf{P1'} is that \eqref{eq:target_LP_2} is changed to \eqref{eq:joint_dual_maxent_maxmin}, which implies that the standard value iteration is replaced with the soft value iteration, where the soft Bellman operator is also a contraction \cite{haarnoja2017sql}. 

Now consider the previous example in Section \ref{subsec:example_reg_need} again.
Unlike  \textbf{P1}, solving  \textbf{P1'} of the example indeed yields a near-optimal max-min policy: $\pi^*(a_1 | s_1) = \pi^*(a_2 | s_1) = \frac{1}{2 + \exp(-\frac{1}{2 \alpha})}$ with $\pi^*(a_1 | s_1) = \pi^*(a_2 | s_1) = 0.5$ as $\alpha \rightarrow 0^+$ (please see Appendix \ref{append:reg_sol_one_state} for the detailed derivation).

Similarly to Section \ref{subsec:P1_to_P2}, we then consider the following  optimization:
\begin{equation} \label{eq:soft_equivalent_convex}
    \textbf{P2'}: ~ \min_{w \in \Delta^K} \mathcal{L}^{soft}(w) = \sum_s \mu_0(s) v^{soft,*}_{w}(s)
\end{equation}
where $\Delta^K$ is the $(K-1)$-simplex and $v^{soft,*}_{w}$ is  the unique fixed point of the soft Bellman optimality operator $\mathcal{T}^{soft,*}_{w}$ defined as   $(\mathcal{T}^{soft,*}_{w} v)(s) := \alpha \log \sum_a \exp [ 1/\alpha * \{ \sum_{k=1}^K w_k r^{(k)}(s,a) + \gamma \sum_{s'}P(s'|s,a)v(s') \} ],\forall s$ for a given $w$.

\begin{theorem}\label{thm_4}
    For each $s$, $v^{soft,*}_{w}(s)$ is a convex function with respect to $w \in \mathbb{R}^K$. Consequently, the objective $\mathcal{L}^{soft}(w) = \sum_s \mu_0(s) v^{soft,*}_{w}(s)$ is also convex with respect to $w \in \mathbb{R}^K$. 
\end{theorem}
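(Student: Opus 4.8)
The plan is to reproduce the argument behind Theorem \ref{thm_1} with two substitutions: the hard Bellman optimality operator $T^*_w$ is replaced by the soft one $\mathcal{T}^{soft,*}_w$, and the subadditivity of $\max$ is replaced by the corresponding property of the log-sum-exp (soft-max) operator. Fix $\lambda\in[0,1]$, vectors $w^1,w^2\in\mathbb{R}^K$, set $\bar{w}_\lambda:=\lambda w^1+(1-\lambda)w^2$, and fix an arbitrary $v:S\to\mathbb{R}$. I would prove by induction on $p\ge 0$ that
\[
(\mathcal{T}^{soft,*}_{\bar{w}_\lambda})^p v \;\le\; \lambda\,(\mathcal{T}^{soft,*}_{w^1})^p v + (1-\lambda)\,(\mathcal{T}^{soft,*}_{w^2})^p v
\]
pointwise on $S$, the base case $p=0$ being the identity $v=\lambda v+(1-\lambda)v$.

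The inductive step rests on two elementary facts about $\mathcal{T}^{soft,*}_w$, valid for the entropy coefficient $\alpha>0$. First, $\mathcal{T}^{soft,*}_w$ is \emph{monotone}: if $u\le u'$ pointwise, then since $\gamma\ge0$ and $P(s'|s,a)\ge0$ the argument $\sum_k w_k r^{(k)}(s,a)+\gamma\sum_{s'}P(s'|s,a)u(s')$ does not decrease in any coordinate $a$, and $x\mapsto\alpha\log\sum_a\exp(x_a/\alpha)$ is coordinatewise nondecreasing. Second, the soft-max is \emph{subadditive under convex combinations} of its argument vector: for any $g,h$ indexed by $a$,
\[
\alpha\log\sum_a\exp\!\Big(\tfrac{\lambda g_a+(1-\lambda)h_a}{\alpha}\Big)\;\le\;\lambda\,\alpha\log\sum_a\exp\!\big(\tfrac{g_a}{\alpha}\big)+(1-\lambda)\,\alpha\log\sum_a\exp\!\big(\tfrac{h_a}{\alpha}\big),
\]
which is just convexity of log-sum-exp (equivalently H\"older's inequality with exponents $1/\lambda$ and $1/(1-\lambda)$). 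Given the induction hypothesis $u:=(\mathcal{T}^{soft,*}_{\bar{w}_\lambda})^p v\le\lambda u_1+(1-\lambda)u_2$ with $u_i:=(\mathcal{T}^{soft,*}_{w^i})^p v$, monotonicity lets me replace $u$ inside $\mathcal{T}^{soft,*}_{\bar{w}_\lambda}$ by $\lambda u_1+(1-\lambda)u_2$; because the reward enters linearly in the weights, $\sum_k \bar{w}_{\lambda,k}r^{(k)}(s,a)=\lambda\sum_k w^1_k r^{(k)}(s,a)+(1-\lambda)\sum_k w^2_k r^{(k)}(s,a)$, so the argument for $\bar{w}_\lambda$ becomes exactly $\lambda g_a+(1-\lambda)h_a$, where $g_a,h_a$ are the arguments associated with $w^1,w^2$; applying the subadditivity inequality then yields the claim for $p+1$.

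Finally, since the soft Bellman optimality operator is a contraction \cite{haarnoja2017sql}, $(\mathcal{T}^{soft,*}_w)^p v\to v^{soft,*}_w$ as $p\to\infty$ for every $v$, so passing to the limit gives $v^{soft,*}_{\bar{w}_\lambda}(s)\le\lambda v^{soft,*}_{w^1}(s)+(1-\lambda)v^{soft,*}_{w^2}(s)$ for all $s$; hence $w\mapsto v^{soft,*}_w(s)$ is convex on $\mathbb{R}^K$, and $\mathcal{L}^{soft}(w)=\sum_s\mu_0(s)v^{soft,*}_w(s)$, being a nonnegative combination ($\mu_0(s)\ge0$) of convex functions, is convex. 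I do not expect a genuine obstacle: the only non-mechanical ingredient is the monotonicity and convexity of log-sum-exp, which plays the role that monotonicity and convexity of $\max$ plays in Theorem \ref{thm_1}; the rest is bookkeeping. In fact the whole argument can be stated abstractly: if $\mathcal{B}_w v(s)=\Phi\big((\sum_k w_k r^{(k)}(s,a)+\gamma\sum_{s'}P(s'|s,a)v(s'))_a\big)$ with $\Phi$ monotone, convex and $\mathcal{B}_w$ a contraction, then its fixed point is convex in $w$, and both $\Phi=\max$ and $\Phi=\mathrm{logsumexp}$ qualify.
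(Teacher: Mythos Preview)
Your proposal is correct and follows essentially the same route as the paper's proof: induction on the number of applications of $\mathcal{T}^{soft,*}_w$, the inductive step using monotonicity of the soft Bellman operator together with the convexity of log-sum-exp (which the paper phrases as H\"older's inequality with exponents $1/\lambda$ and $1/(1-\lambda)$), and then passage to the limit via the contraction property. The only cosmetic differences are that the paper starts the induction at $p=1$ rather than $p=0$ and leaves the monotonicity implicit, whereas you make it explicit and add the nice abstract remark about general monotone convex $\Phi$.
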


\begin{proof}
    Appendix \ref{append:convexity_soft}.
\end{proof}

\begin{theorem}\label{thm_5}
    Solving \textbf{P2'} is equivalent to solving  \textbf{P1'}.
\end{theorem}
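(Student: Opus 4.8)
The plan is to adapt the proof of Theorem~\ref{thm_2} to the entropy-regularized setting, using that the soft Bellman optimality operator $\mathcal{T}^{soft,*}_{w}$ is a $\gamma$-contraction in the sup-norm \cite{haarnoja2017sql} and is monotone (log-sum-exp is coordinatewise nondecreasing, and $v \mapsto \gamma\sum_{s'}P(s'|s,a)v(s')$ is monotone since $\gamma \ge 0$ and $P \ge 0$). First I would observe that, for a fixed $w$, the constraint \eqref{eq:joint_dual_maxent_maxmin} defining the feasible set of \textbf{P1'} is precisely the fixed-point equation $v = \mathcal{T}^{soft,*}_{w} v$; because $\mathcal{T}^{soft,*}_{w}$ is a contraction it has the unique fixed point $v^{soft,*}_{w}$, so the only $v$ that is feasible in \textbf{P1'} together with a given $w$ is $v = v^{soft,*}_{w}$. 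In particular the feasible set of \textbf{P1'} is nonempty for every $w \in \Delta^K$.

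Next I would split the minimization in \textbf{P1'} over $(w,v)$ into an outer minimization over $w \in \Delta^K$ and an inner minimization over the $v$ feasible with that $w$; by the previous step the inner problem is degenerate and evaluates to $\sum_s \mu_0(s) v^{soft,*}_{w}(s) = \mathcal{L}^{soft}(w)$. Hence the optimal value of \textbf{P1'} equals $\min_{w \in \Delta^K}\mathcal{L}^{soft}(w)$, which is exactly the optimal value of \textbf{P2'}; this minimum is attained because, by Theorem~\ref{thm_4}, $\mathcal{L}^{soft}$ is convex and therefore continuous on the compact set $\Delta^K$ \cite{rockafellar1997convex}. From this I would read off the correspondence of solutions: if $w^*$ is optimal for \textbf{P2'} then $(w^*, v^{soft,*}_{w^*})$ is optimal for \textbf{P1'}, and conversely if $(w^{op}, v^{op})$ is optimal for \textbf{P1'} then necessarily $v^{op} = v^{soft,*}_{w^{op}}$ and $w^{op}$ is optimal for \textbf{P2'} — this is the precise meaning of ``equivalent'' here, mirroring the statement of Theorem~\ref{thm_2}.

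I do not expect a serious obstacle: the equality form of \eqref{eq:joint_dual_maxent_maxmin} makes the inner minimization trivial, and the only point requiring care is the justification that \eqref{eq:joint_dual_maxent_maxmin} pins $v$ down uniquely, i.e.\ that $\mathcal{T}^{soft,*}_{w}$ has a unique fixed point, which is exactly where the contraction property of the soft Bellman operator enters — the analogue of the role of \eqref{eq:equivalence_observation} in the proof of Theorem~\ref{thm_2}. If one instead writes the constraint in the relaxed LP-dual form $v(s) \ge (\mathcal{T}^{soft,*}_{w} v)(s)$ for all $s$, the soft counterpart of \eqref{eq:target_LP_2}, then the inner minimization is no longer degenerate and one needs the monotonicity argument as the crux: iterating $v \ge \mathcal{T}^{soft,*}_{w} v \ge (\mathcal{T}^{soft,*}_{w})^2 v \ge \cdots$ and passing to the limit yields $v \ge v^{soft,*}_{w}$ pointwise, so $\sum_s \mu_0(s) v(s) \ge \mathcal{L}^{soft}(w)$ (using $\mu_0 \ge 0$) with equality at $v = v^{soft,*}_{w}$, recovering the same reduction to \textbf{P2'}.
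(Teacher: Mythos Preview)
Your proposal is correct and takes essentially the same approach as the paper: since the constraint \eqref{eq:joint_dual_maxent_maxmin} in \textbf{P1'} is an \emph{equality} fixed-point equation and $\mathcal{T}^{soft,*}_{w}$ is a contraction, the only feasible $v$ for a given $w$ is $v^{soft,*}_{w}$, which immediately reduces \textbf{P1'} to \textbf{P2'}. The paper's proof is just this two-line observation; your framing via Theorem~\ref{thm_2} and the extra discussion of the inequality variant are unnecessary here (the equality constraint makes the situation strictly simpler than in \textbf{P1}), though not incorrect.
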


\begin{proof}
    Given $w \in \Delta^K$, the only feasible $v$ satisfying \eqref{eq:joint_dual_maxent_maxmin} is $v = v^{soft,*}_w$. Plugging $v = v^{soft,*}_w$ in \eqref{eq:joint_dual_maxent_maxmin_obj} gives \eqref{eq:soft_equivalent_convex}.
\end{proof}

Unlike in Theorem \ref{thm_3} stating $\mathcal{L}(w)$ is piecewise-linear in the unregularized case, $v^{soft,*}_{w}(s)$ and  hence $\mathcal{L}^{soft}(w)$ with entropy regularization are  continuously differentiable  with respect to $w \in \Delta^K$, as shown in the following theorem.

\begin{theorem}\label{thm_6}
    For each $s$, $v^{soft,*}_w(s)$ is a continuously differentiable function with respect to $w \in \mathbb{R}^K$. Consequently, the objective $\mathcal{L}^{soft}(w) = \sum_s \mu_0(s) v^{soft,*}_w(s)$ is also continuously differentiable function with respect to $w \in \mathbb{R}^K$.
\end{theorem}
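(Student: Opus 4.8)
The plan is to apply the implicit function theorem to the fixed-point equation that defines $v^{soft,*}_w$. Since $S$ and $A$ are finite, identify a value function with a vector in $\mathbb{R}^{n}$, $n=|S|$, and define $F:\mathbb{R}^K\times\mathbb{R}^{n}\to\mathbb{R}^{n}$ by $F(w,v):= v - \mathcal{T}^{soft,*}_{w} v$, i.e.
\begin{equation*}
F(w,v)(s) = v(s) - \alpha \log \sum_a \exp\!\Big[\tfrac{1}{\alpha}\Big\{\textstyle\sum_{k=1}^K w_k r^{(k)}(s,a) + \gamma \sum_{s'} P(s'|s,a) v(s')\Big\}\Big].
\end{equation*}
The argument of the exponential is affine in $(w,v)$, the inner sum is strictly positive, and $\exp,\log$ are $C^\infty$, so $F$ is $C^\infty$ jointly in $(w,v)$. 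Moreover $\mathcal{T}^{soft,*}_{w}$ is a $\gamma$-contraction in $\|\cdot\|_\infty$ for every $w\in\mathbb{R}^K$ \cite{haarnoja2017sql} (the $w$-dependent reward merely shifts the log-sum-exp arguments, which does not affect the contraction modulus), so for each $w$ the equation $F(w,v)=0$ has the unique solution $v=v^{soft,*}_w$.

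Next I would compute the partial Jacobian $D_v F(w,v)$. Differentiating the log-sum-exp, $\partial(\mathcal{T}^{soft,*}_{w} v)(s)/\partial v(s'') = \gamma\sum_a \pi_{w,v}(a\mid s)P(s''\mid s,a)$, where $\pi_{w,v}(a\mid s)\propto \exp(\tfrac1\alpha\{\sum_k w_k r^{(k)}(s,a)+\gamma\sum_{s'}P(s'\mid s,a)v(s')\})$ is a probability distribution over $A$. Hence $D_v F(w,v) = I - M(w,v)$ with $M(w,v)_{s,s''} = \gamma\sum_a \pi_{w,v}(a\mid s)P(s''\mid s,a)\ge 0$ and row sums $\sum_{s''}M(w,v)_{s,s''} = \gamma$, so $\|M(w,v)\|_\infty = \gamma < 1$ and $D_v F(w,v)=I-M(w,v)$ is invertible (Neumann series). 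The implicit function theorem applied at each point $(w,v^{soft,*}_w)$ then yields a $C^1$ (indeed $C^\infty$) local parametrization $w\mapsto v^{soft,*}_w$; since the fixed point is unique for every $w$, these local maps coincide with the single global map $w\mapsto v^{soft,*}_w$, which is therefore $C^1$ on all of $\mathbb{R}^K$. Finally, $\mathcal{L}^{soft}(w)=\sum_s\mu_0(s)v^{soft,*}_w(s)$ is a finite linear combination of $C^1$ functions, hence $C^1$.

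The step needing the most care is the invertibility of $D_v F$: one must recognize that differentiating the log-sum-exp produces a softmax-weighted average of the rows $\gamma P(\cdot\mid s,a)$ — i.e. $\gamma$ times a stochastic matrix — so that $I-M$ is uniformly well-conditioned, and one must note that the implicit function theorem only gives local smoothness, with the passage to a globally $C^1$ map justified by uniqueness of the fixed point. (Alternatively one could avoid the implicit function theorem and argue directly that the smooth Picard iterates $(\mathcal{T}^{soft,*}_{w})^p v_0$ and their $w$-gradients both converge uniformly on compact $w$-sets, the gradient recursion being contractive with the same factor $M$; but the implicit-function route is shorter.)
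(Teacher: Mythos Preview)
Your proposal is correct and follows essentially the same route as the paper: define $F(w,v)=v-\mathcal{T}^{soft,*}_w v$, note it is smooth, compute $D_vF=I-M$ where $M$ is $\gamma$ times a (softmax-averaged) stochastic matrix so that $I-M$ is invertible, apply the implicit function theorem, and pass from local to global using uniqueness of the fixed point. The only cosmetic difference is that the paper invokes a separate lemma (that $I-\gamma A$ is invertible for row-stochastic $A$) where you argue via $\|M\|_\infty=\gamma<1$ and the Neumann series; these are equivalent.
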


\textit{Proof sketch.} 
The theorem follows by applying the implicit function theorem with the fact that $\mathcal{T}^{soft,*}_{w}$ has the unique fixed point for each $w$ (please see Appendix \ref{append:continuous_differentiable} for the details).

Hence, $\mathcal{L}^{soft}(w)$ is not piecewise-linear. However, $v^{*}_{w}(s)$ and thus $\mathcal{L}^{soft}(w)$ have Lipschitz continuity, as shown in Theorem \ref{thm_7}, which is the property of  any piecewise-linear function with finite segments. Lipschitz continuity is a core condition for our proposed method in Section \ref{sec:model_free}.

\begin{theorem}\label{thm_7}
    For each $s$, $v^{soft,*}_w(s)$ is Lipschitz continuous as a function of $w$ on $\mathbb{R}^K$ in $\| \cdot \|_{\infty}$, and so is $\mathcal{L}^{soft}(w)$.
\end{theorem}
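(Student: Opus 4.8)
\textit{Proof proposal.} The plan is to treat $w\mapsto v^{soft,*}_w$ as a parametrized fixed point of a \emph{uniformly} contractive family of operators and run the standard perturbation argument. Concretely, I would combine two ingredients: (i) for every $w\in\mathbb{R}^K$ the soft Bellman optimality operator $\mathcal{T}^{soft,*}_{w}$ is a $\gamma$-contraction in $\|\cdot\|_\infty$ with the \emph{same} modulus $\gamma$ (the weights only rescale the immediate-reward term, so the contraction comes entirely from the $\gamma\sum_{s'}P(s'|s,a)$ factor, exactly as in the soft value iteration cited after \eqref{eq:soft_optimal_policy}); and (ii) for any fixed $v:S\to\mathbb{R}$, the map $w\mapsto \mathcal{T}^{soft,*}_{w}v$ is Lipschitz in $w$ for $\|\cdot\|_\infty$. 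Together these force the fixed point to inherit Lipschitzness, and averaging against $\mu_0$ carries it over to $\mathcal{L}^{soft}$.

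For ingredient (ii), set $Q_w(s,a):=\sum_{k=1}^K w_k r^{(k)}(s,a)+\gamma\sum_{s'}P(s'|s,a)v(s')$, so that $(\mathcal{T}^{soft,*}_{w}v)(s)=\alpha\log\sum_a\exp(Q_w(s,a)/\alpha)$. First I would note $|Q_{w^1}(s,a)-Q_{w^2}(s,a)|=|\sum_k(w^1_k-w^2_k)r^{(k)}(s,a)|\le\|w^1-w^2\|_\infty\sum_k|r^{(k)}(s,a)|\le R\|w^1-w^2\|_\infty$, where $R:=\max_{s,a}\sum_k|r^{(k)}(s,a)|<\infty$ since $S$ and $A$ are finite. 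Then I would invoke the elementary fact that the scaled log-sum-exp map $x\mapsto\alpha\log\sum_a\exp(x_a/\alpha)$ is $1$-Lipschitz in $\|\cdot\|_\infty$: if $\max_a|x_a-y_a|\le\epsilon$ then $\sum_a\exp(x_a/\alpha)\le e^{\epsilon/\alpha}\sum_a\exp(y_a/\alpha)$, and applying $\alpha\log(\cdot)$ and symmetrizing gives the bound. Applying this, for each fixed $s$, to the vectors $Q_{w^1}(s,\cdot)$ and $Q_{w^2}(s,\cdot)$ yields $\|\mathcal{T}^{soft,*}_{w^1}v-\mathcal{T}^{soft,*}_{w^2}v\|_\infty\le R\|w^1-w^2\|_\infty$, uniformly in $v$.

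Writing $v^i:=v^{soft,*}_{w^i}$ and using $v^i=\mathcal{T}^{soft,*}_{w^i}v^i$, the triangle inequality together with (i) and (ii) gives
\begin{align*}
\|v^1-v^2\|_\infty
&\le\|\mathcal{T}^{soft,*}_{w^1}v^1-\mathcal{T}^{soft,*}_{w^1}v^2\|_\infty+\|\mathcal{T}^{soft,*}_{w^1}v^2-\mathcal{T}^{soft,*}_{w^2}v^2\|_\infty\\
&\le\gamma\|v^1-v^2\|_\infty+R\|w^1-w^2\|_\infty,
\end{align*}
hence $\|v^{soft,*}_{w^1}-v^{soft,*}_{w^2}\|_\infty\le\frac{R}{1-\gamma}\|w^1-w^2\|_\infty$, so each $v^{soft,*}_w(s)$ is Lipschitz in $w$ with constant $\frac{R}{1-\gamma}$. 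Finally, since $\mu_0$ is a probability distribution, $|\mathcal{L}^{soft}(w^1)-\mathcal{L}^{soft}(w^2)|\le\sum_s\mu_0(s)\,|v^{soft,*}_{w^1}(s)-v^{soft,*}_{w^2}(s)|\le\frac{R}{1-\gamma}\|w^1-w^2\|_\infty$, which is the claim.

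I do not anticipate a genuine obstacle here: the argument is the textbook ``perturbation of a parametrized fixed point under a uniform contraction.'' The only two points needing a little care are confirming that the $\gamma$-contraction holds with a $w$-independent modulus (immediate, as $w$ acts only on the reward term) and making the reward bound $R$ explicit (which rests on the standing finiteness of $S$ and $A$, equivalently boundedness of $r$). Note also that nothing in the argument uses $w\in\Delta^K$, so the Lipschitz estimate holds on all of $\mathbb{R}^K$ as stated.
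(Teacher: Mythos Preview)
Your proposal is correct and follows essentially the same approach as the paper: both split $\|v^{soft,*}_{w^1}-v^{soft,*}_{w^2}\|_\infty$ via the fixed-point equations and a triangle inequality into a $\gamma$-contraction term and a Lipschitz-in-$w$ term, then rearrange to get the constant $\frac{\max_{s,a}\|r(s,a)\|_1}{1-\gamma}$, which is exactly your $R/(1-\gamma)$. The only cosmetic difference is that the paper bounds the second term via the Mean Value Theorem and a lemma showing $\|\partial_w(\mathcal{T}^{soft,*}_w v)(s)\|_1\le\max_a\|r(s,a)\|_1$, whereas you reach the same bound more directly using the $1$-Lipschitzness of log-sum-exp in $\|\cdot\|_\infty$.
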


\begin{proof}
    Appendix \ref{append:lipschitz_continuity}.
\end{proof}

Suppose we have solved \textbf{P2'} and obtained the optimal $(w^*, v^{soft,*}_{w^*})$. We then explicitly recover the optimal policy of \textbf{P0'} as \eqref{eq:soft_optimal_policy}. 
The soft Q-function \cite{haarnoja2017sql} $Q^{soft,*}_{w^*}$  corresponding to $v^{soft,*}_{w^*}$ satisfies the soft Bellman equation $Q^{soft,*}_{w^*}(s,a) = \sum_{k=1}^K w^{*}_k r^{(k)}(s,a) + \gamma \sum_{s'} P(s' | s,a) v^{soft,*}_{w^*}(s')$. Then, by dividing $\alpha$ and taking exponential on the both sides of \eqref{eq:joint_dual_maxent_maxmin}, 
 \eqref{eq:joint_dual_maxent_maxmin} can be  rewritten
as $\sum_{a'} \exp \left( \frac{Q^{soft,*}_{w^*}(s,a') - v^{soft,*}_{w^*}(s)}{\alpha} \right) = 1, ~ \forall s$. Since $\eta_{v^{soft,*}_{w},w}(s,a)= Q^{soft,*}_{w}(s,a)-v^{soft,*}_{w}(s)$ as seen just below \eqref{eq:kkt_stationarity}, the optimal policy $\pi^{*}$ of \textbf{P0'} is written as $\pi^{*}(a|s) = \exp \left( \frac{Q^{soft,*}_{w^*}(s,a) - v^{soft,*}_{w^*}(s)}{\alpha} \right)$, or
\begin{equation}
    \pi^{*}(a|s) = \text{softmax}_a  \{ Q^{soft,*}_{w^*}(s,a) / \alpha  \}.
\end{equation}

Note that \textbf{P2'} is basically weight optimization combined with soft value iteration. Thus, \textbf{P2'} is the  basis from which we derive our model-free max-min MORL algorithm. The overall development procedure is summarized in Fig. \ref{idea_flow}. 

\begin{figure}[ht]
\begin{center}
\centerline{\includegraphics[width=\columnwidth]{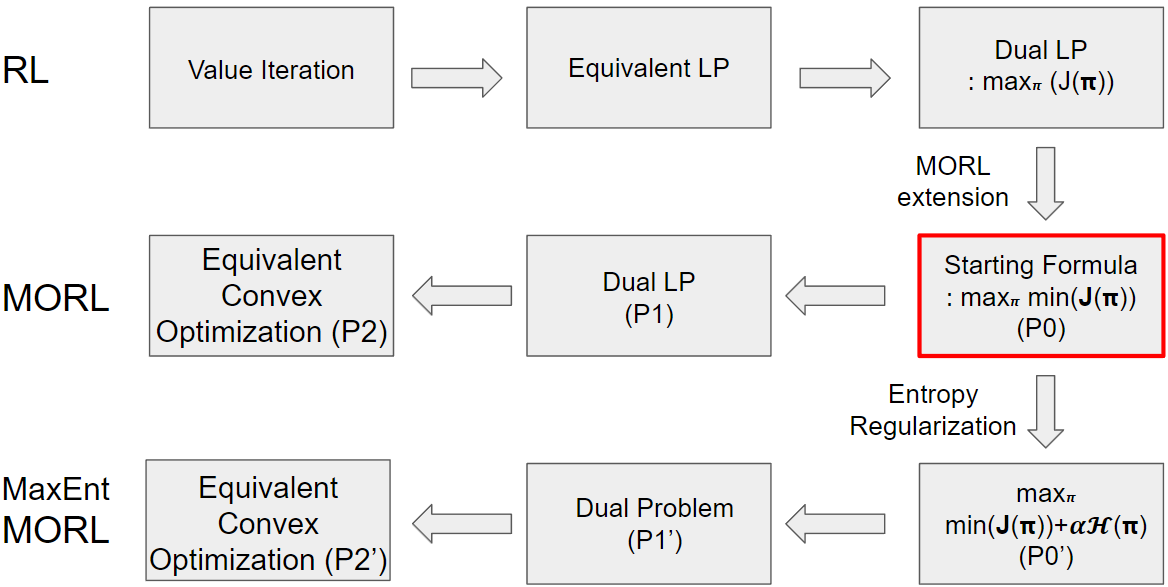}}
\caption{Our formulation procedure of the max-min problem.}
\label{idea_flow}
\end{center}
\vskip -0.2in
\end{figure}

\section{The Proposed Model-Free Algorithm}\label{sec:model_free}

Our key idea to solve \textbf{P2'} and obtain a model-free value-based max-min MORL algorithm  is the alternation between $Q^{soft}_w$ update with scalarized reward for given $w$ and the $w$ update for given $v^{soft}_w = \mathbb{E}_{s \sim \mu_0} [ \alpha \log \sum_a \exp [Q^{soft}_{w}(s,a) / \alpha]]$.  For the $Q^{soft}_w$ update for given $w$, we adopt the soft Q-value iteration \cite{haarnoja2017sql}. Thus, we need to devise  a stable method for the $w$ update for  given $v^{soft}_w$.

\subsection{Gradient Estimation Based on Gaussian Smoothing}

A basic $w$ update  method to solve \textbf{P2'} is gradient descent with  the gradient  $\nabla_w \mathcal{L}^{soft}(w) \bigg\rvert_{w = w^m}$ at the $m$-th step, and updates  $w^m$ to $w^{m+1}$ by using the gradient, where $\mathcal{L}^{soft}(w) = \sum_s \mu_0(s) v^{soft,*}_{w}(s) = \mathbb{E}_{s \sim \mu_0} [ \alpha \log \sum_a \exp [Q^{soft,*}_{w}(s,a) / \alpha]]$ is the objective function acquired from soft Q-learning \cite{haarnoja2017sql}. Here, $Q^{soft,*}_{w}$ is the unique fixed point of the soft Bellman operator with the scalarization weight $w$ satisfying the soft Bellman equation $Q^{soft,*}_{w}(s,a) = \sum_{k=1}^K w_k r^{(k)}(s,a) + \gamma \sum_{s'} P(s' | s,a) v^{soft,*}_{w}(s')$ (the convergence of soft Q-value iteration is guaranteed by \citet{fox2016taming,haarnoja2017sql}). However,  the closed form of $Q^{soft,*}_{w}$ (and  consequently $\mathcal{L}^{soft}(w)$) with respect to $w$ is unknown. Hence,  deriving $\nabla_w \mathcal{L}^{soft}(w)$ is  challenging. 

To circumvent this difficulty, numerical computation of gradient can be employed.   A naive approach is the dimension-wise finite difference gradient estimation \cite{silver2015} in which the gradient is estimated as $\frac{\partial \mathcal{L}^{soft}(w)}{\partial w_k} \approx \frac{1}{\epsilon}(\mathcal{L}^{soft}(w+\epsilon e_k) - \mathcal{L}^{soft}(w)),\forall k$, where  $e_k$ is the one-hot vector with $k$-th dimension value 1. However, this method is sensitive to function noise and has a tendency to produce unstable estimation \cite{silver2015}.

In order to have a stable gradient estimation, we propose a novel gradient estimation based on linear regression. Given a current weight point $w^m \in \mathbb{R}^K$ at $m$-th step, we generate $N$ perturbed samples $\{ w^m + \mu u_i^m  \}_{i=1}^N$, where $u_i^m \sim \mathcal{N}(0,I_K)$ with the identity matrix $I_K$ of size $K$, and $\mu > 0$ is a perturbation size parameter. Using the input samples $\{ w^m + \mu u_i^m  \}_{i=1}^N$,  we compute the output values $\{ \mathcal{L}^{soft}( w^m + \mu u_i^m  )  \}_{i=1}^N$ of the function $\mathcal{L}^{soft}(w)$ and  obtain a linear regression function $h_m(w) = a_m^T w + b_m$ from the input to output values. Then, we use the linear coefficient $a_m$ as an estimation of  $\nabla_w \mathcal{L}^{soft}(w) \bigg\rvert_{w = w^m}$ and update the weight as $w^{m+1} =  \text{proj}_{ \Delta^K } \left(w^{m} - l_m  a_m \right)$, where $l_m$ is a learning rate at the $m$-th step and $\text{proj}_{ \Delta^K}$ is the projection onto the $(K-1)$-simplex. 

The validity of the proposed gradient estimation method is provided by  the concept of Gaussian smoothing \cite{nesterov2017random}. For  a convex (possibly non-smooth) function $g:\mathbb{R}^K\to\mathbb{R}$,  its Gaussian smoothing  is defined as $g_\mu(x):=\mathbb{E}_{u\sim\mathcal{N}(0,I_K)}[g(x+\mu u)]$, where $\mu > 0$ is a smoothing parameter. Then, 
$g_\mu$ is  convex due to the convexity of $g$, and is  an upper bound of $g$. If $g$ is $L_0$-Lipschitz continuous, then the gap between $g$ and $g_\mu$ is $\|g_\mu(x) - g(x)\| \leq \mu L_0 \sqrt{K}, ~ x \in \mathbb{R}^K$ \cite{nesterov2017random}. Furthermore,   $g_\mu$ is differentiable and its  gradient is given by   $\nabla g_\mu(x)=\mathbb{E}_{u\sim\mathcal{N}(0,I_K)}[\frac{1}{\mu}g(x+\mu u) u]$ \cite{nesterov2017random}. Note that  we do not know the exact form of $g$ but we only know the value $g(x)$ given input $x$. 

Theorem \ref{thm_8} provides an interpretation of our gradient estimation in terms of Gaussian smoothing: 

\begin{theorem}\label{thm_8}
    Given a current point $x \in \mathbb{R}^K$,  let $a$ be the  coefficient vector  of linear regression
$h(x) = a^T x + b$ for $N$ perturbed input points $\{ x + \mu u_i  \}_{i=1}^N$, and $N$ corresponding output values $\{ g(x + \mu u_i)  \}_{i=1}^N$ of a function $g$,  where $u_i \sim \mathcal{N}(0,I_K)$ and $\mu > 0$. Then,  
$a \rightarrow \nabla g_\mu(x)$ as $N \rightarrow \infty$, where $g_\mu$ is the Gaussian smoothing of $g$. 
\end{theorem}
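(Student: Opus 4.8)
The plan is to write the ordinary least squares solution in closed form and then pass to the limit $N\to\infty$ via the strong law of large numbers. First I would reparametrize away the fixed center $x$: since every input point has the form $x + \mu u_i$, the affine model $h(w) = a^\top w + b$ evaluated at these points equals $\mu\, a^\top u_i + (a^\top x + b)$, so least-squares fitting of $(a,b)$ to the data $\{(x+\mu u_i,\, g(x+\mu u_i))\}_{i=1}^N$ is identical to regressing the responses $y_i := g(x+\mu u_i)$ on the regressors $z_i := \mu u_i$ with an intercept. The OLS slope then has the standard covariance-over-variance form
\begin{equation}
    a = \Bigl( \tfrac{1}{N}\sum_{i=1}^N (z_i - \bar z)(z_i - \bar z)^\top \Bigr)^{-1} \Bigl( \tfrac{1}{N}\sum_{i=1}^N (z_i - \bar z)(y_i - \bar y) \Bigr),
\end{equation}
well defined almost surely once $N \ge K+1$, where $\bar z,\bar y$ denote sample means.

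Next I would take $N\to\infty$ and apply the SLLN to each block. Because the $u_i$ are i.i.d.\ $\mathcal{N}(0,I_K)$, we get $\bar z = \mu\bar u \to 0$ a.s.\ and the sample covariance of $z$ converges a.s.\ to $\mu^2\,\mathrm{Cov}(u) = \mu^2 I_K$, while the cross term converges a.s.\ to $\mathrm{Cov}(z,y) = \mu\,\mathbb{E}[u\,g(x+\mu u)]$ (using $\mathbb{E}[u]=0$ to drop the product-of-means term). By continuity of matrix inversion at the nonsingular matrix $\mu^2 I_K$, it follows that $a \to (\mu^2 I_K)^{-1}\cdot\mu\,\mathbb{E}[u\,g(x+\mu u)] = \tfrac{1}{\mu}\,\mathbb{E}_{u\sim\mathcal{N}(0,I_K)}[g(x+\mu u)\,u]$, which is exactly $\nabla g_\mu(x)$ by the integral representation of the smoothed gradient quoted just before the theorem.

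The step that needs genuine care is the integrability hypothesis behind the SLLN. Here I would invoke that the relevant $g = \mathcal{L}^{soft}$ is Lipschitz (Theorem~\ref{thm_7}), so $|g(x+\mu u)| \le |g(x)| + \mu L_0\|u\|$; combined with the finiteness of all Gaussian moments, this makes every coordinate of $z_i z_i^\top$ and of $z_i y_i$ integrable, so the SLLN applies coordinatewise. A minor additional point is to observe that the $N\times(K+1)$ design matrix with rows $[\,\mu u_i^\top\ \ 1\,]$ has full column rank almost surely for $N \ge K+1$ (Gaussian samples are a.s.\ in general position), so the estimator is unique for each large $N$, and since the limiting Gram matrix is nonsingular the inversion is continuous near the limit — hence no degeneracy survives. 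Everything else is a routine law-of-large-numbers computation.
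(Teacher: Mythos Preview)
Your proposal is correct and follows essentially the same route as the paper: write the OLS slope in closed form as (sample covariance)$^{-1}$(sample cross-covariance), apply the law of large numbers to each block using $u_i\sim\mathcal{N}(0,I_K)$ i.i.d., and identify the limit with $\tfrac{1}{\mu}\mathbb{E}[g(x+\mu u)\,u]=\nabla g_\mu(x)$. Your version is in fact more careful than the paper's, which does not explicitly check the integrability needed for the SLLN or the a.s.\ full rank of the design matrix; invoking the Lipschitz bound from Theorem~\ref{thm_7} to control $|g(x+\mu u)|$ is exactly the right way to justify that step.
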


\begin{proof}
    Since $u_i \sim i.i.d.~\mathcal{N}(0,I_K)$, by law of large numbers, we have $\frac{1}{N}\sum_i u_i\to 0, \frac{1}{N}\sum_i u_i(u_i)^T = \frac{1}{N}\sum_i (u_i - 0)(u_i - 0)^T \to \mathbb{E}_{u \sim \mathcal{N}(0,I_K)} [(u - 0)(u - 0)^T] = I_K$. If we solve the linear regression $\min_{a,b}\sum_i\{a^T(x + \mu u_i)+b - g(x + \mu u_i)\}^2$, we have $a
    =\frac{1}{\mu}(\frac{1}{N}\sum_i u_i(u_i)^T-\frac{1}{N^2}\sum_i u_i \sum_i (u_i)^T)^{-1}(\frac{1}{N}\sum_i g(x+\mu u_i)u_i-\frac{1}{N^2}\sum_i g(x+\mu u_i) \sum_i u_i)
    \to \frac{1}{\mu}(I_K-0\cdot 0^T)^{-1}(\mathbb{E}_{u\sim\mathcal{N}(0,I_K)}[ g(x+\mu u) u]-g_\mu(x)\cdot0)
    = \nabla g_\mu(x)$.
\end{proof}

Since our gradient estimate approximates $\nabla g_\mu(w)$ with $g(w)= \mathcal{L}^{soft}(w)$, the proposed method ultimately finds the optimal value of the Gaussian smoothing of $\mathcal{L}^{soft}(w)$ which approximates the optimal value of $\mathcal{L}^{soft}(w)$ with the gap is $O(\mu)$ under the assumption of the Lipschitz continuity of $\mathcal{L}^{soft}(w)$. Note that the Lipschitz continuity of $\mathcal{L}^{soft}(w)$ is guaranteed by  
Theorem \ref{thm_7}. 

The proposed algorithm based on alternation between $w$ update and soft Q-value update is summarized in Algorithm \ref{alg:maxent_maxmin}. Our source code is provided at \url{https://github.com/Giseung-Park/Maxmin-MORL}. For projected gradient decent onto the simplex, we use the optimization technique from \citet{DBLP:journals/corr/WangC13a}. 

Note that the $N$ perturbed weights in Line 6 of Algorithm \ref{alg:maxent_maxmin} do not deviate much from the current weight $w^m$. So, in our implementation, we perform one step of gradient update for Soft Q-learning in Line 8 for each copy $\hat{Q}_{w^m, \text{copy}, n}$. Thus, the overall complexity of the proposed algorithm is at the level of Soft Q-learning with slight increase due to linear regression at each step (please see Appendix \ref{subappend:additional_ablation} for the analysis on computation).

\begin{algorithm}[t] 
\caption{Proposed Max-min Model-free Algorithm} \label{alg:maxent_maxmin}
\begin{algorithmic}[1]
    \STATE $K$: reward dimension, $\hat{Q}$: initialized soft Q-network, $\mathcal{M}$: replay buffer,  $U$: iteration number, $N$: number of perturbed samples, $\mu$: perturbation parameter, $l_0$: initial learning rate of the weight $w$.
    \STATE Initialize weight $w^0 \in \Delta^K$ (e.g., uniform).
    \STATE Update $\hat{Q}$ using soft Q-learning with weight $w^{0}$ and acquire $\hat{Q}_{w^{0}, \text{main}}$. Save rollout samples in $\mathcal{M}$.
    \FOR{$m = 0, 1, 2, \cdots, U-1$}
        \STATE{Rollout sample from $\hat{Q}_{w^m, \text{main}}$ and save it in $\mathcal{M}$.}
        \STATE Generate $N$ perturbed weights $\{ w^m + \mu u^{m}_n \}_{n=1}^{N}$, $u^{m}_n \sim \mathcal{N}(0, I_K)$.
        \STATE Make $N$ copies of $\hat{Q}_{w^{m}, \text{main}}: \{ \hat{Q}_{w^m, \text{copy}, n} \}_{n=1}^{N}$.
        \STATE Update each $\hat{Q}_{w^m, \text{copy}, n}$ by soft Q-learning with $w^m + \mu u^{m}_n$ using common samples in $\mathcal{M}$ and target function to acquire $\hat{Q}_{w^m + \mu u^{m}_n, \text{copy}, n}$. 
        \STATE Calculate $\hat{L}(w^m + \mu u^{m}_n) = \mathbb{E}_{s \sim \mu_0}  [ \alpha \log \sum_a \exp [\hat{Q}_{w^m + \mu u^{m}_n, \text{copy}, n}(s,a) / \alpha]]$.
        \STATE Conduct linear regression using $\{w^m + \mu u^{m}_n, \hat{L}(w^m + \mu u^{m}_n)  \}_{n=1}^{N}$ and calculate the linear weight $a_m$.
        \STATE Discard $\{ \hat{Q}_{w^m + \mu u^{m}_n, \text{copy}, n} \}_{n=1}^{N}$.
        \STATE Update $w^m$ using the projected gradient descent:
        \begin{equation*} \label{eq:proj_grad}
            w^{m+1} =   \text{proj}_{ \Delta^K } \left(w^{m} - l_m  a_m  \right).
        \end{equation*}
        \STATE Schedule current learning rate of the weight $l_m$.
        \STATE Update $\hat{Q}_{w^m, \text{main}}$ using soft Q-learning with weight $w^{m+1}$ and acquire $\hat{Q}_{w^{m+1}, \text{main}}$.
    \ENDFOR
    \STATE Return $\hat{\pi}^*(a|s) = \text{softmax}_a  \{ \hat{Q}_{w^{U}, \text{main}}(s,a) / \alpha  \}$.
\end{algorithmic}
\end{algorithm}

\section{Experiments}

\subsection{Max-Min Performance}\label{subsec:performance}

For comparison with our value-based method, we consider the following value-based baselines: (i) Utilitarian, which is a standard Deep Q-learning (DQN) \cite{mnih13} using averaged rewards $\frac{1}{K} \sum_{k=1}^K r^{(k)}$, and (ii) Fair Min-DQN (MDQN), an extension of the Fair-DQN concept \cite{siddique20fair} to the max-min fair metric maximizing $\mathbb{E}[\min_{1 \leq k \leq K} \sum_t \gamma^t r^{(k)}_t]$. For performance evaluation, we calculate the empirical value of $\min_{1 \leq k \leq K} \mathbb{E} [\sum_t \gamma^t r^{(k)}_t]$, where $\mathbb{E} [\sum_t \gamma^t r_t]$ is calculated with five random seeds.

\begin{figure}[ht]
    \begin{center}
        \includegraphics[width=0.6\linewidth]{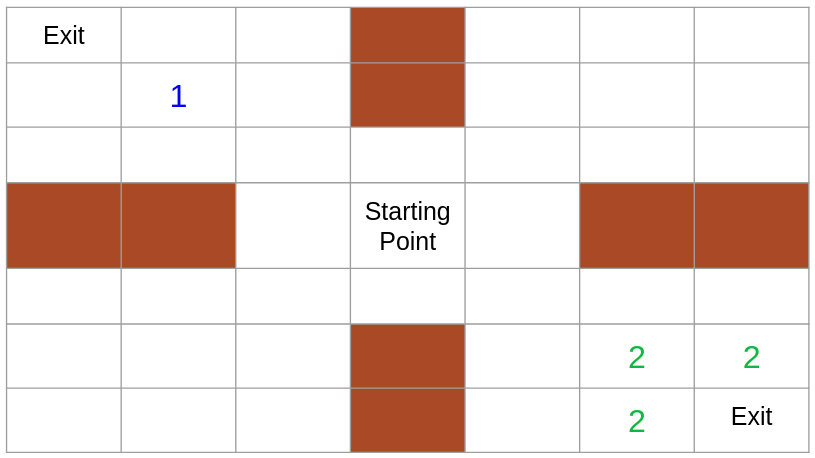}
        \includegraphics[width=0.7\linewidth]{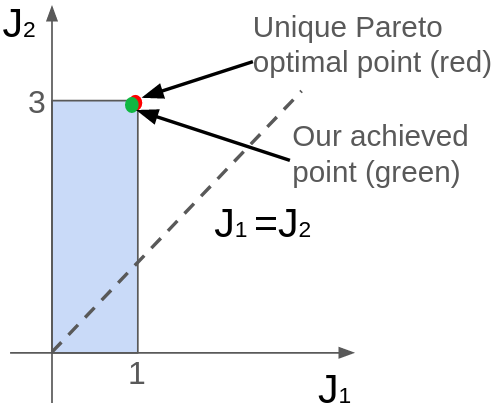}
    \end{center}
    \caption{ (Up) Four-Room environment \cite{felten_toolkit_2023} and (Down) achievable return region in the Four-Room environment (light blue), the unique Pareto optimal point (red dot), and the point our algorithm achieved: $(J_1, J_2) = (0.96, 2.88)$ (green dot).  }
    \label{fig:four_room_main_body}
\end{figure}

First, we consider Four-Room \cite{felten_toolkit_2023}, a widely used MORL environment. The goal is to collect as many elements as possible in a square four-room maze within a given time. As seen in Fig. \ref{fig:four_room_main_body} (Up), there exist two types of elements: Type 1 and Type 2. In our experiment, we have four elements in total: one element of Type 1 and three elements of Type 2. The reward vector has dimension 2, and the  $i$-th dimension of the reward is +1 if an element of Type $i$ is collected, and 0 otherwise. We strategically clustered the three elements of Type 2 near one exit, while the one element of Type 1 was positioned near the other exit. We intend to challenge the agent to balance its collection strategy to prevent it from overly favoring Type 2 over Type 1. The metric is calculated over the 200 most recent episodes.

\begin{table}[t]
\centering
\begin{tabular}{|c|c|c|c|}
\hline
 & Type 1 (max 1) & Type 2 (max 3) & Min \\
\hline
Proposed & 0.96 & 2.88 & \textbf{0.96 } \\
MDQN & 0.64 & 0.60 & 0.60 \\
Utilitarian & 0.76 & 2.56 & 0.76 \\
\hline
\end{tabular}
\caption{Performance in Four-Room environment.}
\label{tab:Experiment_four_room_common_main_body}
\vskip -0.2in
\end{table}

As shown in Table \ref{tab:Experiment_four_room_common_main_body}, the return vector of our method Pareto-dominates the  two return vectors of the other two conventional algorithms. Note that the performance of conventional MDQN is  poor. This is because  MDQN performs  $\max_{a'} \min_{1 \leq k \leq K}[r^{(k)} + \gamma Q^{(k)}(s',a')]$. Suppose that the Q-network is initialized as all zero values. Then,  $\min_{1 \leq k \leq K}[r^{(k)} + \gamma Q^{(k)}(s',a')]$ becomes non-zero only when both types of elements are collected and Q-function is updated only when this happens. However, this event is rare in the initial stage and the learning is slow.  This example shows the limitation of conventional MDQN for max-min MORL. Note that our algorithm almost achieves the unique Pareto-optimal point of this problem, as shown in Fig. \ref{fig:four_room_main_body} (Down).

\begin{figure}[!ht]
  \centering
    \subfigure[]{
            \includegraphics[width=2.7cm,height=2.7cm]{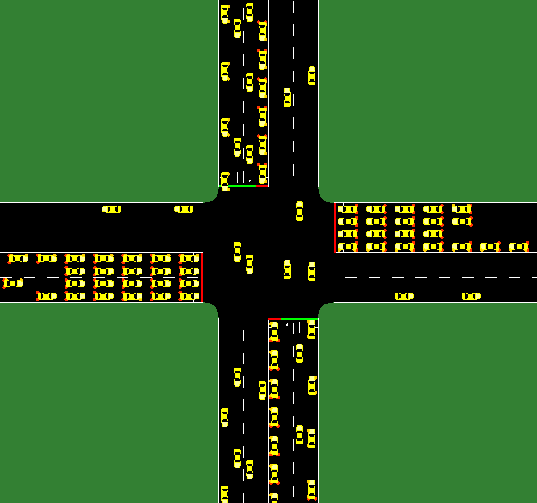}
            \label{fig:traffic_overview}
    }    
    \subfigure[]{
    \includegraphics[width=4.9cm,height=2.7cm]{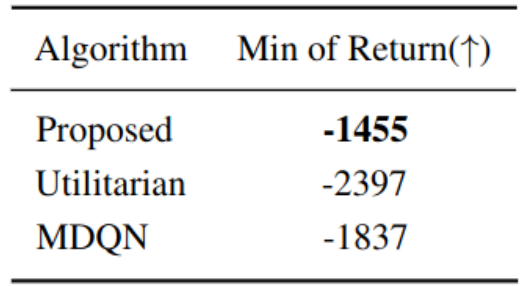}
    \label{tab:performance_traffic_min}
    }
    \subfigure[]{
\centerline{\includegraphics[width=0.9\columnwidth]{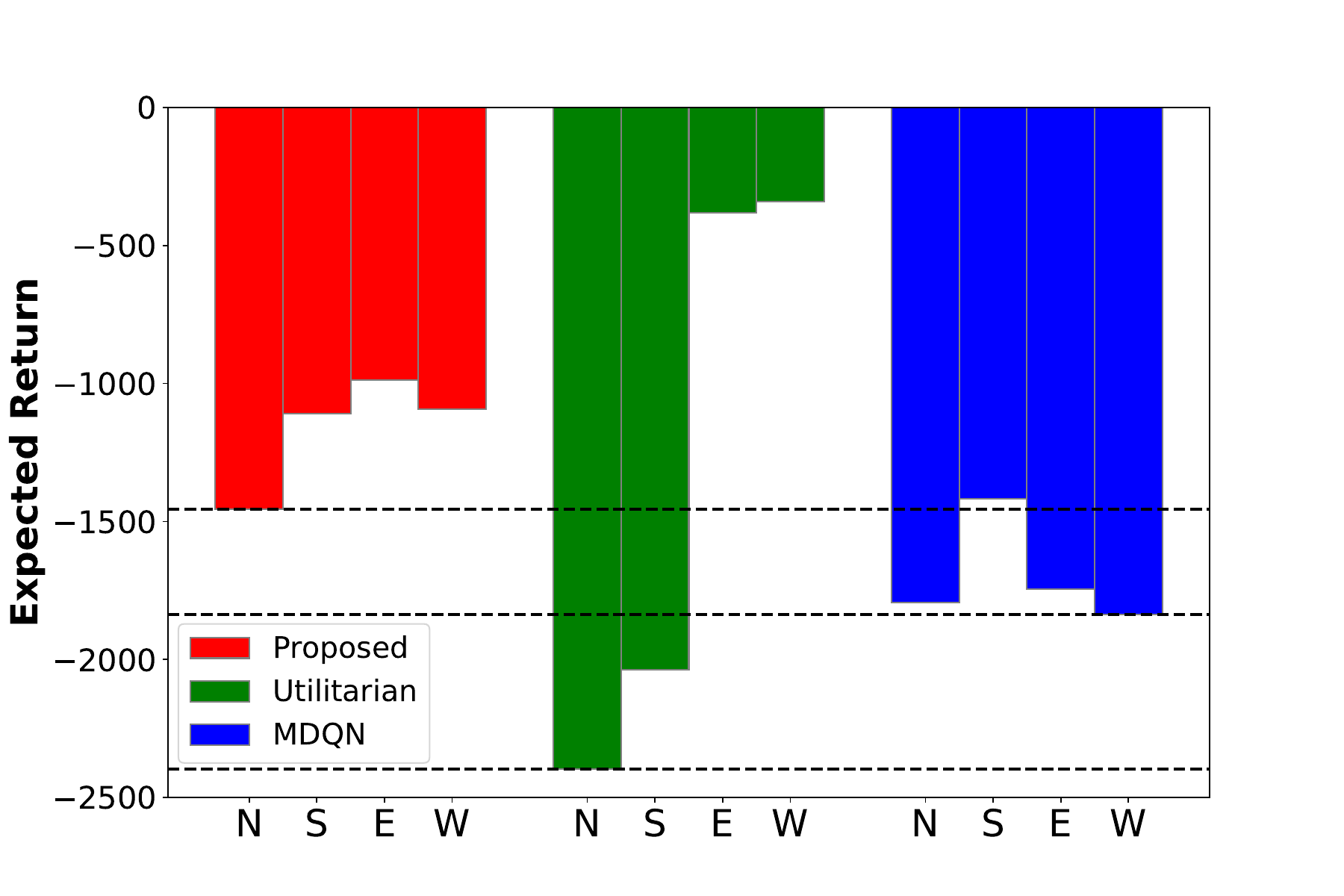}}
    \label{fig:performance_traffic_each_direction}
    }
    \subfigure[]{
     \begin{tabular}{c|cccc|c}
        \hline
        Direction & N & S & E & W & Sum\\
        \hline
        Learned weight & 0.43 & 0.30 & 0.15 & 0.12 & 1\\
        \hline
    \end{tabular}
    \label{tab:learned_weights}
    }
  \caption{ (a) Traffic light control task under consideration, (b) Minimum value of the expected discounted return vector across four dimensions, (c) Expected discounted return for each direction, and (d) Average value of the learned weights of the proposed algorithm. In (c), each black dashed line for each algorithm represents the minimum value of the return across four dimensions.  }
  \label{fig:performance_traffic}
  \vskip -0.2in
\end{figure}

Next, as a realistic multi-objective environment, we consider the traffic light control simulation environment, illustrated
in Fig. \ref{fig:traffic_overview}  \cite{sumorl}. The intersection comprises four road directions (North, South, East, West), with each road containing four inbound and four outbound lanes. At each time step, the agent receives a state containing information about traffic flows. The traffic light controller then selects its traffic light phase as its action. 

The reward is a four-dimensional vector, with each dimension representing a quantity proportional to the negative of the total waiting time for cars on each road. The objective of the traffic light controller is to adjust the traffic signals to minimize the cumulative discounted sum of rewards. We configured the traffic flow to be asymmetric, with a higher influx of cars from the North and South compared to those from the East and West. The metric is calculated over the 32 most recent episodes. (Please see Appendix \ref{subappend:environment} for details on the considered traffic light control environment and Appendix \ref{subappend:implement} for the implementation details.)

Table \ref{tab:performance_traffic_min} shows that the proposed method achieves better max-min performance than the other baselines. Fig. \ref{fig:performance_traffic_each_direction} shows the expected return per direction for each algorithm. Unlike  the proposed method, Utilitarian exhibits a larger gap between the North-South and East-West return values. As shown in Table \ref{tab:learned_weights}, the proposed method assigns larger weight values to North and South. On the other hand, the Utilitarian approach utilizes averaged rewards over dimensions (i.e., weight 0.25  for each direction), resulting in a relatively smaller weight on North-South and a larger weight on East-West, thereby widening the gap between North-South and East-West return values. The standard deviation over the four dimensions is 174.8 for the proposed method and 937.2 for Utilitarian. Compared with Utilitarian, MDQN demonstrates better performance in North-South but worse performance in East-West. Furthermore, the performance of non-minimum other lanes of MDQN is far worse than that of the proposed method. 
Overall, the proposed method achieves the best minimum performance  and shifts up the non-minimum dimension performance  by doing so.

For additional experiments in Species Conservation \cite{siddique23fair}, another widely used MORL environment, please see Appendix \ref{subappend:additional_experiments}. 

\subsection{Ablation Study}\label{subsec:ablation_study}

\begin{figure}[ht]
\begin{center}
\centerline{\includegraphics[width=0.9\columnwidth]{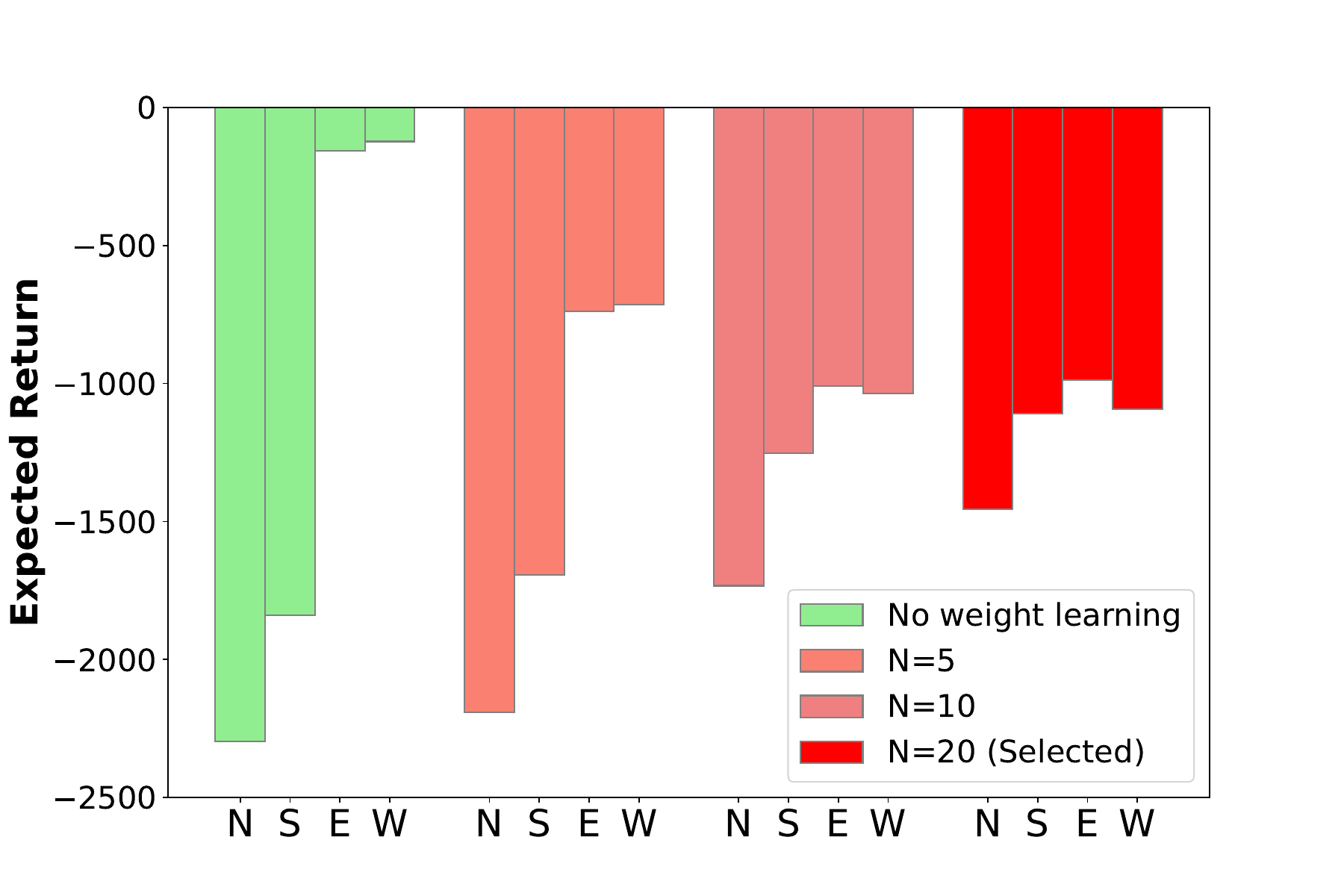}}
\caption{Ablation study on the effect of weight learning and the number of perturbed samples $N$ (the same traffic light control task as in Section \ref{subsec:performance})}
\label{fig:ablation_bar}
\end{center}
\vskip -0.2in
\end{figure}

We examined the impact of weight learning on the performance, which constitutes one of the core components of our proposed approach. We conducted an ablation study by disabling the weight learning update, which resulted in training the algorithm with uniformly initialized weights across directions, while keeping other parts of the algorithm the same. Additionally, we varied the number of perturbed samples $N$ for linear regression,  discussed in Section \ref{sec:model_free}.

{\em Impact of Weight Learning} ~~As shown in Fig. \ref{fig:ablation_bar}, when weight learning was disabled, the gap between the North-South and East-West return values increases. This phenomenon is due to the uniformly initialized weights, leading to performance characteristics similar to those of the Utilitarian approach shown in Fig. \ref{fig:performance_traffic_each_direction}.

{\em Impact of $N$ on $w$ Gradient Estimation} ~~ As the number of perturbed samples $N$ increased, the gap between the North-South and East-West return values decreased, resulting in improved minimum performance. Thus, a sufficient  $N$ (around 20) is required to yield an accurate  $w$ gradient estimate by the proposed approach  outlined in Section \ref{sec:model_free}.

\section{Related Works}

The prevailing trend in MORL is the utility-based approach \cite{roijers13survey}, where the objective is to find an optimal policy $\pi^* = \arg \max_{\pi} f(J(\pi))$ given a non-decreasing scalarization function $f: \mathbb{R}^K \rightarrow \mathbb{R}$. Prioritizing user preferences or welfare aligns well with practical applications \cite{hayes22survey}.

When $f$ is linear, i.e., $f(J(\pi)) = \sum_{k=1}^K w_k J_k(\pi)$ with $\sum_k w_k =1, ~w_k \ge 0,\forall k$, each non-negative weight vector $w$ generates a scalarized MDP where an optimal policy exists \cite{boutilier99optimal}. This formulation simplifies the solution process using standard RL algorithms, shifting the research focus towards acquiring a single network that can produce multiple optimal policies over the weight vector space \cite{abels19moq,yang19envq}. \citet{yang19envq} proposed a multi-objective optimality operator and extended the standard Bellman optimality equation in a multi-objective setting with linear $f$. Subsequent works have addressed two main challenges in this setting: sample efficiency \cite{basaklar23pdmorl,hung23qpensieve} and learning stability \cite{lu23capql}.

When $f$ is non-linear, formulating Bellman optimality equations becomes non-trivial due to the restriction on linearity \cite{roijers13survey}. While some works attempt to develop value-based approaches when $f$ represents certain welfare functions \cite{siddique20fair,fan23welfare}, these methods are related to optimizing $\mathbb{E}_\pi \left[ f( \sum_{t=0}^\infty \gamma^t {r}_t ) \right]$, rather than $f(J(\pi)) = f( \mathbb{E}_\pi \left[ \sum_{t=0}^\infty \gamma^t {r}_t \right] )$, which upper bounds $\mathbb{E}_\pi \left[ f( \sum_{t=0}^\infty \gamma^t {r}_t ) \right]$ when $f$ is concave. In contrast, we propose a value-based method that explicitly optimizes $f(J(\pi))$ when $f$ represents the minimum function.

\section{Conclusion}
We have considered the max-min formulation of MORL to ensure fairness among multiple objectives in MORL. We approached   the problem based on  linear programming and convex optimization  and derived the joint problem of weight optimization and soft value iteration equivalent to the original max-min problem with entropy regularization. We developed a model-free max-min MORL algorithm that alternates weight update with  Gaussian smoothing gradient estimation and soft value update. The proposed method well achieves the max-min optimization goal and yields better performance than baseline methods in the max-min sense.

\section*{Acknowledgements}
This work was supported by the National Research Foundation of Korea (NRF) grant funded by the Korea government (MSIT) (2022K1A3A1A31093462), and 
the Ministry of Innovation, Science \& Technology, Israel and ISF grant 2197/22. 

\section*{Impact Statement}
This paper considers the max-min formulation of MORL and derives a relevant theory and a practical and efficient model-free algorithm for MORL. Since many real-world control problems are formulated as multi-objective optimization, the proposed max-min MORL algorithm can significantly contribute to solving many real-world control problems such as the traffic signal control shown in our experiment section and thus  building an energy-efficient better society.

\bibliography{references}
\bibliographystyle{icml2024}

\newpage
\appendix
\onecolumn

\newpage
\section{The Wide Use of Max-Min Approach} \label{append:wide_use_maxmin}

\subsection{Practical Applications}
The max-min approach to  multi-objective optimization  has been widely adopted in many practical applications.  Most notably, it has been widely used in \textbf{resource allocation} problems in wireless communication networks (e.g., \citet{zehavi2013weighted}) and scheduling for which RL is being actively investigated as a new control mechanism.   

For example, in scheduling cloud computing resources, a job is typically parsed into multiple tasks which form a directed acyclic graph (DAG, \citet{saifullah14dag}; \citet{wang19dag}), representing the dependencies. In these cases, we need to allocate resources/servers so that dependent tasks will minimize the maximal time of a task among the tasks required to move to the next task in the DAG. This implies that the natural metric is minimizing the delay of the worst user. This problem is most naturally formulated using the max-min formulation, where we aim to maximize the minimal negative delay. In many cases, jobs are repetitive and one needs to optimize the allocation without knowing the statistics of each job on each machine.  

Similarly, when we are providing service to multiple users where we contract all users the same data rate (similarly to Ethernet which has a fixed rate), we would like to maximize the rate of the worst user.  We believe that our max-min MORL algorithm can be used for such resource allocation problems immediately once the problems are set up as RL.

Finally, our max-min MORL approach can provide an alternative way to cooperative \textbf{multi-agent RL} (MARL) problems with central training with distributed execution (CTDE). Currently, in most cooperative MARL, it is assumed that all agents receive the commonly shared scalar reward,  and this causes the lazy agent problem because even if some agents are doing nothing, they still receive the commonly shared reward.  Under CTDE, we can approach cooperative MARL by letting each agent have its individual reward and collecting individual rewards as the elements of a vector reward. Then, we can apply our max-min MORL approach. This is a promising research direction to solve the lazy-agent problem in cooperative MARL.

\subsection{Restoring the Pareto-Front from the Max-Min Approach}

The max-min solution typically yields the equalizer rule \cite{zehavi2013weighted}. That is, if we solve 
\[
    \max_{\pi \in \Pi} \min_{1 \leq k \leq K}  J_k(\pi), ~~~\mbox{where}~~ K \geq 2,
\]
then the max-min solution has the property $J_1(\pi)=J_2(\pi)=\cdots=J_K(\pi)$ if this equalization point is on the Pareto boundary.  In the case of $K=2$, the max-min point is thus the point on which the Pareto boundary meets the line $J_1 = J_2$, as seen in Fig. 1 of the paper if the Pareto boundary and the line $J_1 = J_2$ meet.

Now, suppose that we scale each objective using $\alpha_k > 0, 1 \leq k \leq K$, and solve
\[ 
    \max_{\pi \in \Pi} \min_{1 \leq k \leq K} \alpha_k  J_k(\pi), ~~~\mbox{where}~~ K \geq 2.
\]
This new problem can also be solved by our method by scaling the reward with factor $\alpha_k$.  Then, the max-min solution of the new problem satisfies the new equalizer rule $\alpha_1 J_1 = \alpha_2 J_2 = \cdots = \alpha_K J_K$ if this equalization point is on the Pareto boundary.  In the case of $K=2$,  the new solution is the point on which the Pareto boundary meets the line $\alpha_1 J_1 = \alpha_2 J_2$, i.e., $J_2 =\frac{\alpha_1}{\alpha_2} J_1$, as seen in Fig. 1 of the paper.  Hence, if we want to obtain the Pareto boundary of the problem, then we can sweep the scaling factors $(\alpha_1,\cdots,\alpha_K)$ and solve the max-min problem for each scaling factor set. Then, we can approximately construct the Pareto boundary by  interpolating the points of considered scaling factor sets. 

Please note that there exist  cases that the Pareto boundary and the equalization line $J_1 = J_2=\cdots=J_K$ or $\alpha_1 J_1 = \alpha_2 J_2 = \cdots = \alpha_K J_K$ do not meet. An example is shown in Fig. \ref{fig:four_room_main_body}, the Four-Room environment in Section \ref{subsec:performance}. Then, the above argument may not hold.  However, even in the case of Four-Room where there is no equalizing Pareto-optimal point, we observe that the proposed method nearly achieves the unique max-min Pareto optimal point.

\newpage
\section{Dual Transformation from P0 to P1} \label{append:a_duality}

\begin{proof}

Using additional slack variable $c = \min_{1 \leq k \leq K}  \sum_{s,a} d(s,a) r^{(k)}(s,a) $ to convert \textbf{P0} to the corresponding LP \textbf{P0-LP}, we have:
\begin{equation}
    \textbf{P0-LP}:    \max_{d(s,a), c} c
\end{equation}
\begin{equation}
    \sum_{s,a} r^{(k)}(s,a) d(s,a) \geq c, ~ 1 \leq k \leq K
\end{equation}
\begin{equation}
    \sum_{a'} d(s',a') = \mu_0(s') + \gamma \sum_{s,a} P(s' | s,a) d(s,a) ~~ \forall s'
\end{equation}
\begin{equation}
    d(s,a) \geq 0, ~~ \forall (s,a) .
\end{equation}

We use the following duality transformation in LP: $\max_x u^T x$ s.t. $Ax = b, x \geq 0 \iff \min_y b^T y$ s.t. $A^T y \geq u$.

Inserting additional non-negative variables $\delta_k (k=1, \cdots, K), c^+, c^-$ to change inequality to equality gives
\begin{equation}
    \max_{d(s,a), \delta_k, c^+, c^- } c^+ - c^-; ~ c^+, c^- \geq 0,
\end{equation}
\begin{equation}
     \delta_k = \sum_{s,a} r^{(k)}(s,a) d(s,a) - c^+ + c^-, \delta_k \geq 0, ~ 1 \leq k \leq K
\end{equation}
\begin{equation}
    \sum_{a'} d(s',a') = \mu_0(s') + \gamma \sum_{s,a} P(s' | s,a) d(s,a) ~ \forall s'; ~~ d(s,a) \geq 0, ~ \forall (s,a) .
\end{equation}

Let $|S| = p, |A|=q$. 
The corresponding matrix formulation is the form of $\max_x u^T x$ s.t. $Ax = b, x \geq 0$ where

\begin{equation}
    x = 
    \begin{bmatrix}
        d(s_1, a_1) \\
        \vdots \\
        d(s_{p}, a_{q}) \\
        \delta_1 \\
        \vdots \\
        \delta_K \\
        c^+ \\
        c^- 
    \end{bmatrix}
    \in \mathbb{R}^{pq+K+2}, 
    ~~ u = 
    \begin{bmatrix}
        0 \\
        \vdots \\
        0 \\
        0 \\
        \vdots \\
        0 \\
        1 \\
        -1 
    \end{bmatrix}
    \in \mathbb{R}^{pq+K+2}, 
    ~~ b = 
    \begin{bmatrix}
        \mu_0(s_1) \\
        \vdots \\
        \mu_0(s_{p}) \\
        0 \\
        \vdots \\
        0
    \end{bmatrix}
    \in \mathbb{R}^{p+K}, 
\end{equation}

\begin{equation}
    A = 
    \begin{bmatrix}
        \begin{array}{c|c|c|c|c}
            A_1 & A_2 & \cdots & A_{p} & D 
        \end{array}
    \end{bmatrix}
    \in \mathbb{R}^{(p+K) \times (pq + K + 2)} ~ \text{with} ~ A_i \in \mathbb{R}^{(p+K) \times q} (1 \leq i \leq p), D \in \mathbb{R}^{(p+K) \times (K+2)}
\end{equation}
where

\[ [A_i]_{jk} =
  \begin{cases}
    1-\gamma P(s_j |s_i, a_k) = 1-\gamma P(s_i |s_i, a_k) & \text{if } j = i \\
    -\gamma P(s_j |s_i, a_k) & \text{if } j \neq i ~ \text{and} ~  1 \leq j \leq p  \\ 
    r^{(j-p)}(s_i, a_k) & \text{if } p+1 \leq j \leq p+K 
  \end{cases}
\]
and
\begin{equation}
    D = 
    \begin{bmatrix}
        \begin{array}{c|c}
            O_{p\times K} & O_{p\times 2} \\  
            \hline
            -I_K & -1_K | 1_K \\
        \end{array}
    \end{bmatrix}
    \in \mathbb{R}^{(p+K) \times (K+2)}.
\end{equation}
Here, $1_K$ is the all-one column vector of length $K$.

Let $y = [v(s_1), \cdots, v(s_{p}), w_1, \cdots, w_K]^T \in \mathbb{R}^{p+K}$. The dual LP problem $\min_y b^T y$ s.t. $A^T y \geq u$ is 
\begin{equation} 
    \min_{w, v} \sum_s \mu_0(s) v(s)
\end{equation}
\begin{equation} 
     v(s) - \gamma \sum_{s'}P(s'|s,a)v(s') + \sum_{k=1}^K w_k r^{(k)}(s,a) \geq 0,~\forall (s,a)
\end{equation}
\begin{equation}
    -w_k \geq 0, ~ 1 \leq k \leq K,
\end{equation}
\begin{equation} 
    -\sum_{k=1}^K w_k \geq 1, ~ \sum_{k=1}^K w_k \geq -1.
\end{equation}

Note that we have the equality constraint of $-\sum_{k=1}^K w_k = 1$. Changing notation from $-w_k$ to $w_k$ gives the following \textbf{P1} problem:
\begin{equation} 
    \min_{w, v} \sum_s \mu_0(s) v(s)
\end{equation}
\begin{equation} 
     \forall (s,a) , ~ v(s) \geq \sum_{k=1}^K w_k r^{(k)}(s,a) + \gamma \sum_{s'}P(s'|s,a)v(s') 
\end{equation}
\begin{equation} 
    \sum_{k=1}^K w_k = 1; ~ w_k \geq 0 ~~ \forall 1 \leq k \leq K.
\end{equation}

\end{proof}

\newpage
\section{Proof of Convexity in Value Iteration} \label{append:convexity}

Recall the Bellman optimality operator $T^*_{w}$ given a weight vector $w \in \mathbb{R}^K$:
\begin{equation}
    \forall s,~~  (T^*_{w} v)(s) := \max_a \left[ \sum_{k=1}^K w_k r^{(k)}(s,a) + \gamma \sum_{s'} P(s'|s,a) v(s') \right].
\end{equation}
Let the unique converged result of the mapping $T^*_{w}$ be $v^{*}_{w}$ which is a function of $w$. We first show that $v^{*}_{w}(s), \forall s,$ is a convex function for $w$. Then due to the linearity, the objective $\mathcal{L}(w) = \sum_s \mu_0(s) v^{*}_{w}(s)$ is also convex for $w$. 

\begin{proof}

For $0 \leq \lambda \leq 1$ and $w^1, w^2 \in \mathbb{R}^K$, let $\bar{w}_{\lambda} := \lambda w^1 + (1-\lambda) w^2$, and set $v$ arbitrary. We will show that for any positive integer $p \geq 1$,
\begin{equation}
    (T^*_{\bar{w}_{\lambda}})^p v \leq \lambda (T^*_{w^1})^p v + (1-\lambda)(T^*_{w^2})^p v.
\end{equation}
If we let $p \rightarrow \infty$, then $v^*_{\bar{w}_{\lambda}}(s) \leq \lambda v^*_{w^1}(s) + (1-\lambda) v^*_{w^2}(s), ~ \forall s$, and the proof is done. We use induction as follows.

Step 1. Base case. Let $a^0_*(s) := \arg \max_a \left[ \sum_k \{ \lambda w_k^{1} + (1-\lambda) w_k^{2} \} r^{(k)}(s,a) + \gamma \sum_{s'} P(s' | s,a) v(s')  \right]$. Then
\begin{align}
     \forall s, [T^*_{\bar{w}_{\lambda}} v](s) &= \max_a \left[ \sum_k \{ \lambda w_k^{1} + (1-\lambda) w_k^{2} \} r^{(k)}(s,a) + \gamma \sum_{s'} P(s' | s,a) v(s')  \right] \nonumber \\
     &= \lambda \left[ \sum_k  w_k^{1}  r^{(k)}(s,a^0_*(s)) + \gamma \sum_{s'} P(s' | s,a^0_*(s)) v(s')  \right] \nonumber \\
     &+ (1-\lambda) \left[ \sum_k  w_k^{2}  r^{(k)}(s,a^0_*(s)) + \gamma \sum_{s'} P(s' | s,a^0_*(s)) v(s')  \right] \nonumber \\
     &\leq  \lambda [T^*_{w^1} v](s) + (1 - \lambda) [T^*_{w^2} v](s).
\end{align}

Step 2. Assume the following is satisfied for a positive integer $p \geq 1$:
\begin{equation} \label{eq:convexity_proof_step2_assumption}
    (T^*_{\bar{w}_{\lambda}})^p v \leq \lambda (T^*_{w^1})^p v + (1-\lambda) (T^*_{w^2})^p v.
\end{equation}
Let $a^p_*(s) := \arg \max_a [ \sum_k \{ \lambda w_k^{1} + (1-\lambda) w_k^{2} \} r^{(k)}(s,a) + \gamma \sum_{s'} P(s' | s,a) [ \lambda (T^*_{w^1})^p v + (1-\lambda)  (T^*_{w^2})^p v](s')  ]$. Then
\begin{align}
     \forall s \in S, ~ [ (T^*_{\bar{w}_{\lambda}})^{p+1} v](s) 
     &= \max_a \left[ \sum_k \{ \lambda w_k^{1} + (1-\lambda) w_k^{2} \} r^{(k)}(s,a) + \gamma \sum_{s'} P(s' | s,a) [ (T^*_{\bar{w}_{\lambda}})^{p} v](s')  \right] \nonumber \\
     &\leq \max_a [ \sum_k \{ \lambda w_k^{1} + (1-\lambda) w_k^{2} \} r^{(k)}(s,a) \nonumber \\
     &+ \gamma \sum_{s'} P(s' | s,a) [ \lambda (T^*_{w^1})^p v + (1-\lambda)  (T^*_{w^2})^p v](s')  ] ~ (\text{Use} ~   \eqref{eq:convexity_proof_step2_assumption}) \nonumber \\
     &= \lambda \left[ \sum_k  w_k^{1}  r^{(k)}(s,a^p_*(s)) + \gamma \sum_{s'} P(s' | s,a^p_*(s)) [(T^*_{w^1})^p v](s')  \right] \nonumber \\
     &+ (1-\lambda) \left[ \sum_k  w_k^{2}  r^{(k)}(s,a^p_*(s)) + \gamma \sum_{s'} P(s' | s,a^p_*(s)) [(T^*_{w^2})^p v](s')  \right] \nonumber \\
     &\leq  \lambda [(T^*_{w^1})^{p+1} v](s) + (1 - \lambda) [(T^*_{w^2})^{p+1} v](s).
\end{align}

\end{proof}

\newpage
\section{Proof of Piecewise-linearity} \label{append:piecewise_linear}

\begin{lemma}\label{lemma:Piecewise-linearity}
Let $A$ be a row stochastic square matrix. Then for any $\gamma\in[0,1)$, $I-\gamma A$ is invertible where $I$ is identity matrix with the same size \cite{horn2012matrix}.
\end{lemma}

\begin{proof}
    Let $A\in\mathbb{R}^{n\times n}$ and $a_i^T\in\mathbb{R}^{n}$ be $i$-th row of $A$.\\
    We show that $x(\in\mathbb{R}^n)\neq0\implies(I-\gamma A)x\neq0$, which is equivalent to ensuring that $I-\gamma A$ is invertible.
    \begin{align*}
        ||(I-\gamma A)x||_\infty & =||x-\gamma Ax||_\infty\\
        & \ge||x||_\infty-\gamma||Ax||_\infty~(\because \text{triangular inequality})\\
        & =||x||_\infty-\gamma\max_i\{|a_i^Tx|\}\\
        & \ge||x||_\infty-\gamma\max_i\{||a_i||_1 ||x||_\infty\}~(\because \text{H\"{o}lder inequality for each }i)\\
        & = ||x||_\infty-\gamma||x||_\infty~(\because \text{row sum 1 with non-negative elements})\\
        & = (1-\gamma)||x||_\infty>0~(\because\gamma<1,~x\neq0)
    \end{align*}
\end{proof}

\textbf{Theorem \ref{thm_3}}
Let the state space $S$ and action space $A$ are finite. Then for any $s\in S$, $v^{*}_{w}(s)$ is a  piecewise-linear function with respect to $w \in \mathbb{R}^K$. Consequently. the objective $\mathcal{L}(w) = \sum_s \mu_0(s) v^{*}_{w}(s)$ is also  piecewise-linear with respect to $w \in \mathbb{R}^K$. 

\begin{proof} Let $S=\{s_1,\dots,s_{p}\}$ and $A=\{a_1,\dots,a_{q}\}$. 
Recall the Bellman optimality operator $T^*_{w}$ given a weight vector $w \in \mathbb{R}^K$:
\begin{equation}
    \forall s,~~  (T^*_{w} v)(s) := \max_a \left[ \sum_{k=1}^K w_k r^{(k)}(s,a) + \gamma \sum_{s'} P(s'|s,a) v(s') \right].
\end{equation}

 By the theory of (single objective) MDP \cite{Puterman_2005}, $<S,A,P,\mu_0,\sum_{k=1}^K w_k r^{(k)},\gamma>$ which is an MDP induced by any $w\in \mathbb{R}^K$ has the unique optimal value function $v^*_w$ and $v^*_w=T^*_wv^*_w$ holds, i.e.
\begin{equation}\label{eq:piecewise_linearity_proof_bellman_operator}
    v^*_w(s) = \max_{a\in A}\{\sum_{k=1}^K w_k r^{(k)}(s,a) + \gamma \sum_{s'}P(s'|s,a)v^*_w(s')\}~\forall w\in\mathbb{R}^K,~s\in S
\end{equation}

For simplicity, we use vector expression; $r(s,a)=[r^{(1)}(s,a),\ldots,r^{(K)}(s,a)]^T\in\mathbb{R}^K$. 

For each $s\in S$, let $D_i(s):=\{w\in\mathbb{R}^K|~i = \min\{argmax_j\{r(s,a_j)^T w + \gamma\sum_{s'}P(s'|s,a_j)v^*_w(s')\}\}$, then $Part(s):=\{D_1(s),\dots,D_{q}(s)\}$ is a partition of $\mathbb{R}^K$ for each $s\in S$. In other words, for arbitrary given $s\in S$, $w\in D_i(s)$ if $a_i$ maximizes RHS of \eqref{eq:piecewise_linearity_proof_bellman_operator} with minimal index $i$. Note that since $A$ is a finite set, minimum operator in $D_i(s)$ is well-defined.\\

For each $i\in[q]:=\{1,\ldots,q\}$, by definition of $D_i(s)$, $v^*_w(s) = r(s,a_i)^T w + \gamma \sum_{s'}P(s'|s,a_i)v^*_w(s')~\forall w\in D_i(s)$.\\
We take the refinement of all partitions $Part(s)$, i.e., $\{D_{i_1}(s_1)\bigcap\cdots\bigcap D_{i_{p}}(s_{p})|~i_j\in[q]~\forall j\}$ which is a partition of $\Delta^K$ consists of at most $q^{p}$ subsets of $\mathbb{R}^K$.\\

On each non-empty $D_{i_1}(s_1)\bigcap\cdots\bigcap D_{i_{p}}(s_{p})$ ($i_j\in[q]~\forall j$),
\begin{align}
    &v^*_w(s_1) = r(s_1,a_{i_1})^Tw+\gamma\Sigma_{s'}P(s'|s_1,a_{i_1})v^*_w(s') \nonumber\\
    &\vdots \nonumber\\
    &v^*_w(s_{p}) = r(s_{p},a_{i_{p}})^Tw+\gamma\Sigma_{s'}P(s'|s_{p},a_{i_{p}})v^*_w(s') \nonumber\\
    \implies & \begin{bmatrix}
        v^*_w(s_1) \\
        \vdots \\
        v^*_w(s_{p})
    \end{bmatrix} = 
    \begin{bmatrix}
        r(s_1,a_{i_1})^T \\
        \vdots \\
        r(s_{p},a_{i_{p}})^T
    \end{bmatrix}w + \gamma
    \begin{bmatrix}
        P(s_1|s_1,a_{i_1}) & \cdots & P(s_{p}|s_1,a_{i_1}) \\
        \vdots & \ddots & \vdots \\
        P(s_1|s_{p},a_{i_{p}}) & \cdots & P(s_{p}|s_{p},a_{i_{p}})
    \end{bmatrix}
    \begin{bmatrix}
        v^*_w(s_1) \\
        \vdots \\
        v^*_w(s_{p})
    \end{bmatrix}\label{eq:piecewise_linearity_proof_matrix_form}
\end{align}
Let $R(i_1,\dots,i_{p})=\begin{bmatrix}
        r(s_1,a_{i_1})^T \\
        \vdots \\
        r(s_{p},a_{i_{p}})^T
    \end{bmatrix}$ and $B(i_1,\dots,i_{p})=\begin{bmatrix}
        P(s_1|s_1,a_{i_1}) & \cdots & P(s_{p}|s_1,a_{i_1}) \\
        \vdots & \ddots & \vdots \\
        P(s_1|s_{p},a_{i_{p}}) & \cdots & P(s_{p}|s_{p},a_{i_{p}})
    \end{bmatrix}$,\\
    which are constant of $w$.\\
    
    Note that $B(i_1,\dots,i_{p})$ has non-negative elements and all row sums are 1. By lemma \ref{lemma:Piecewise-linearity}, $I-\gamma B(i_1,\dots,i_{p})$ is invertible. From \eqref{eq:piecewise_linearity_proof_matrix_form}, 
    $v_w^* = [(I-\gamma B(i_1,\dots,i_{p}))^{-1}R(i_1,\dots,i_{p})]w~~\forall w\in D_{i_1}(s_1)\bigcap\cdots\bigcap D_{i_{p}}(s_{p})$ and thus, $v_w^*$ is linear on each non-empty $D_{i_1}(s_1)\bigcap\cdots\bigcap D_{i_{p}}(s_{p})$. Therefore, $v_w^*$ is piecewise-linear on $\mathbb{R}^K$ with at most ${q}^{p}$ pieces.
\end{proof}

\newpage
\section{Comparison between Entropy Regularization and KL-Divergence based Regularization} \label{append:entropy_regul_simple}

In addition to ensuring convex optimization and promoting exploration, entropy regularization is favored over  general KL-divergence counterpart due to its \textbf{algorithmic simplicity}.

First, we present the following KL-divergence regularized formulation, denoted as \textbf{P0'-KL}, and its convex dual problem, denoted as \textbf{P1'-KL}:

\begin{align*}
    \textbf{P0'-KL}:~~\max_d\min_{1\le k \le K}~& \sum_{s,a} d(s,a)(r^{(k)}(s,a) - \alpha D(\pi^d(\cdot|s) || \pi^{d_\beta}(\cdot|s) ) )\\
    s.t.~~&\sum_{a'}d(s',a') = \mu_0(s') + \gamma\sum_{s,a}P(s'|s,a)d(s,a)~\forall s'\\
    & d(s,a)\ge 0 ~\forall s,a
\end{align*}
where $\pi^d(a|s):=\frac{d(s,a)}{\sum_{a'}d(s,a')}$;  $\pi^{d_\beta}(a|s):=\frac{d_\beta(s,a)}{\sum_{a'}d_\beta(s,a')}$ is the anchor policy from any anchor distribution we want  $d_\beta:S\times A\to\mathbb{R}$; and $D(\cdot||\cdot)$ denotes KL-divergence. Assume that $d_\beta(s,a)>0~\forall s,a$. Using the similar manipulation in Section \ref{subsec:reg_maxent_derive}, the dual problem reduces to the following problem:
\begin{align*}
    \textbf{P1'-KL}: ~~ &\min_{w \in \Delta^K, v} \sum_s \mu_0(s) v(s)\\
    s.t.~~&v(s)  = \alpha \log \sum_a \pi^{d_\beta}(a|s) \exp [ \frac{1}{\alpha}  \{ \sum_{k=1}^K w_k r^{(k)}(s,a) + \gamma \sum_{s'}P(s'|s,a)v(s') \} ].
\end{align*}

In general, additional processes are required for learning or memorizing $\pi^{d_\beta}$ to impose specific target or anchor information we want. For example, in offline RL setting $\pi^{d_\beta}$ is learned to follow the behavior policy that generated pre-collected data \cite{wu19brac,kumar20cql}. Please note that entropy regularization corresponds to the special case of KL regularization in which the anchor  distribution $\pi^{d_\beta}$ is simply uniform. Consequently, there is no need for additional learning procedures or memory regarding $\pi^{d_\beta}$, and the problem becomes simpler because $\pi^{d_\beta}$ is uniform in the above equation.

\newpage
\section{Solution of P1' in the One-state Example} \label{append:reg_sol_one_state}

\textbf{P1'} is written as follows:
\begin{equation}
        \min_{v(s_1), w_1} v(s_1) 
\end{equation}
\begin{equation}
     \exp(\frac{3 w_1 - (1-\gamma)v(s_1)}{\alpha})  + \exp(\frac{3 (1 - w_1) - (1-\gamma)v(s_1)}{\alpha}) + \exp(\frac{1 - (1-\gamma)v(s_1)}{\alpha}) = 1.
\end{equation}
\begin{equation}
    0 \leq w_1 \leq 1.
\end{equation}
This is equivalent to
\begin{equation}
    \min_{0 \leq w_1 \leq 1} v(s_1) = \frac{\alpha}{1-\gamma} \log \bigg[ \exp(\frac{3 w_1}{\alpha}) + \exp(\frac{3 (1-w_1)}{\alpha}) + \exp(\frac{1}{\alpha}) \bigg].
\end{equation}

\begin{itemize}
    \item The analytic exact solution is $w^*_1 = w^*_2 = \frac{1}{2}, v^*(s_1) = \frac{\alpha}{1-\gamma} \log \bigg[  2\exp(\frac{3}{2 \alpha}) + \exp(\frac{1}{\alpha}) \bigg]$.
    \item $\pi^*(a_1 | s_1) = \pi^*(a_2 | s_1) = \frac{1}{2 + \exp(-\frac{1}{2 \alpha})}, \pi^*(a_3 | s_1) = \frac{\exp(-\frac{1}{2 \alpha})}{2 + \exp(-\frac{1}{2 \alpha})}$.
    \item $\alpha \rightarrow 0^+$ recovers the max-min optimal policy $\pi^*(a_1 | s_1) = \pi^*(a_2 | s_1) = 0.5$ in \textbf{P0}.
\end{itemize}

We denote the optimal value for the regularized problem as $v^*_\alpha(s_1) := \frac{\alpha}{1-\gamma} \log \bigg[  2\exp(\frac{3}{2 \alpha}) + \exp(\frac{1}{\alpha}) \bigg]$. Then, the gap between  $v^*(s_1)$, the optimal value for \textbf{P1}, and $v^*_\alpha(s_1)$ is 
 \[
 |v^*(s_1)-v^*_\alpha(s_1)| 
= \frac{1}{1-\gamma}|\alpha\log2 + \alpha\log(1+\frac{1}{2}\exp(-\frac{1}{2\alpha}))|
\le \frac{1}{1-\gamma}|\alpha\log2 + \frac{\alpha}{2}\exp(-\frac{1}{2\alpha})| = O(\alpha)
\]
with $\alpha>0$  $(\because \log(1+t) \le t)$. The gap vanishes as $\alpha\to0$ in the one-state two-objective MDP example. 

As mentioned in Appendix \ref{subappend:implement}, we scheduled $\alpha$ during training so that it diminishes as time goes on.  Hence, in the later stage of learning, we expect the gap to diminish in our implemented algorithm.

\newpage
\section{Proof of Convexity in Soft Value Iteration} \label{append:convexity_soft}

\textbf{Theorem \ref{thm_4}}
Let $(\mathcal{T}^{soft,*}_{w} v)(s) := \alpha \log \sum_a \exp [ 1/\alpha * \{ \sum_{k=1}^K w_k r^{(k)}(s,a) + \gamma \sum_{s'}P(s'|s,a)v(s') \} ]$, $\forall s \in S$. Let the unique fixed point of $\mathcal{T}^{soft,*}_{w}$ be $v^{soft,*}_{w}$, and $\mathcal{L}^{soft}(w) := \sum_s \mu_0(s) v^{soft,*}_{w}(s)$. Then $v^{soft,*}_{w}(s), \forall s \in S,$ is a convex function with respect to $w \in \mathbb{R}^K$. Consequently, the objective $\mathcal{L}^{soft}(w) = \sum_s \mu_0(s) v^{soft,*}_{w}(s)$ is also convex with respect to $w \in \mathbb{R}^K$. 

\begin{proof}
    We first show that $v^{soft,*}_{w}(s), \forall s,$ is a convex function for $w$. Then due to the linearity, the objective $\mathcal{L}^{soft}(w) = \sum_s \mu_0(s) v^{soft,*}_{w}(s)$ is also convex for $w$.

    For $0 \leq \lambda \leq 1$ and $w^1, w^2 \in \mathbb{R}^K$, let $\bar{w}_{\lambda} := \lambda w^1 + (1-\lambda) w^2$, and set $v$ arbitrary. We will show that for any positive integer $p \geq 1$,
    \begin{equation}
        (\mathcal{T}^{soft,*}_{\bar{w}_{\lambda}})^p v \leq \lambda (\mathcal{T}^{soft,*}_{w^1})^p v + (1-\lambda)(\mathcal{T}^{soft,*}_{w^2})^p v.
    \end{equation}
    If we let $p \rightarrow \infty$, then $v^{soft,*}_{\bar{w}_{\lambda}}(s) \leq \lambda v^{soft,*}_{w^1}(s) + (1-\lambda) v^{soft,*}_{w^2}(s), ~ \forall s$, and the proof is done. If $\lambda=0$ or $1$, equality is satisfied for $p \geq 1$. Suppose $0 < \lambda < 1$. We use induction as follows.
    
Step 1. Base case. According to H\"{o}lder's inequality, we have 
\begin{equation} \label{eq:holder}
    \log \sum_{a} u_a^\lambda v_a^{1-\lambda} \leq \log \bigg[ \left\{ \sum_{a} (u_a^\lambda)^{\frac{1}{\lambda}} \right\}^\lambda \cdot \left\{ \sum_{a} (v_a^{1-\lambda})^{\frac{1}{1-\lambda}} \right\}^{1-\lambda} \bigg] =  \lambda \log \sum_{a} u_a + (1 - \lambda)\log \sum_{a} v_a.
\end{equation}

Setting
\begin{equation}
    u_a = \exp \left\{ 1/\alpha * \{ \sum_{k=1}^K w_k^1 r^{(k)}(s,a) + \gamma \sum_{s'}P(s'|s,a)v(s') \}  \right\}
\end{equation}
and 
\begin{equation}
    v_a = \exp \left\{ 1/\alpha * \{ \sum_{k=1}^K w_k^2 r^{(k)}(s,a) + \gamma \sum_{s'}P(s'|s,a)v(s') \}  \right\}
\end{equation}
directly gives
\begin{equation}
    [\mathcal{T}^{soft,*}_{\bar{w}_{\lambda}} v](s) \leq  \lambda [\mathcal{T}^{soft,*}_{w^1} v](s) + (1 - \lambda) [\mathcal{T}^{soft,*}_{w^2} v](s), \forall s \in S.
\end{equation}

Step 2. Assume the following is satisfied for a positive integer $p \geq 1$:
\begin{equation} \label{eq:convexity_proof_step2_assumption_soft}
    (\mathcal{T}^{soft,*}_{\bar{w}_{\lambda}})^p v \leq \lambda (\mathcal{T}^{soft,*}_{w^1})^p v + (1-\lambda) (\mathcal{T}^{soft,*}_{w^2})^p v.
\end{equation}
Then $\forall s \in S$, we have 
\begin{align}
     & [ (\mathcal{T}^{soft,*}_{\bar{w}_{\lambda}})^{p+1} v](s) \nonumber \\
     &= \alpha \log \sum_a \exp \left[ 1/\alpha * \sum_k \{ \lambda w_k^{1} + (1-\lambda) w_k^{2} \} r^{(k)}(s,a) + 1/\alpha *\gamma \sum_{s'} P(s' | s,a) [ (\mathcal{T}^{soft,*}_{\bar{w}_{\lambda}})^{p} v](s')  \right] \nonumber \\
     &\leq  \alpha \log \sum_a \exp [ 1/\alpha * \sum_k \{ \lambda w_k^{1} + (1-\lambda) w_k^{2} \} r^{(k)}(s,a) \nonumber \\
     &+ 1/\alpha * \gamma \sum_{s'} P(s' | s,a) [ \lambda (\mathcal{T}^{soft,*}_{w^1})^p v + (1-\lambda)  (\mathcal{T}^{soft,*}_{w^2})^p v](s')  ] ~ (\text{Use} ~   \eqref{eq:convexity_proof_step2_assumption_soft}) \nonumber \\
     &\leq  \lambda [(\mathcal{T}^{soft,*}_{w^1})^{p+1} v](s) + (1 - \lambda) [(\mathcal{T}^{soft,*}_{w^2})^{p+1} v](s).
\end{align}
The last inequality is directly given from $u_a = \exp \left\{ 1/\alpha * \{ \sum_{k=1}^K w_k^1 r^{(k)}(s,a) + \gamma \sum_{s'}P(s'|s,a)[(\mathcal{T}^{soft,*}_{w^1})^{p} v](s') \}  \right\}$ and $v_a = \exp \left\{ 1/\alpha * \{ \sum_{k=1}^K w_k^2 r^{(k)}(s,a) + \gamma \sum_{s'}P(s'|s,a)[(\mathcal{T}^{soft,*}_{w^2})^{p} v](s') \}  \right\}$, and applying \eqref{eq:holder}.

\end{proof}

\newpage
\section{ Proof of Continuous Differentiability of $v_w^{soft,*}$ } \label{append:continuous_differentiable}

\textbf{Theorem \ref{thm_6}} $v_w^{soft,*}$ is continuously differentiable in $w$ on $\mathbb{R}^K$.

\begin{proof}
    Let $|S|=p$. Define $f(w,v):=\mathcal{T}^{soft,*}_w v\in\mathbb{R}^{|S|}$, $F(w,v):=v-f(w,v) \in\mathbb{R}^{|S|}$, and let $w\in\mathbb{R}^K$ be arbitrary fixed. Since $\mathcal{T}^{soft,*}_w$ is a contraction mapping for each $w\in\mathbb{R}^K$, by Banach fixed point theorem, there exists unique fixed point $v^{soft,*}_w$ for each $w$. It means that $v^{soft,*}_w=f(w,v^{soft,*}_w)~\forall w$, which is equivalent to $F(w,v^{soft,*}_w)=0\in\mathbb{R}^{|S|}~\forall w$. For the proof, we will apply implicit function theorem to $F$.\\

    First, $f$ is continuously differentiable in $(w,v)$ since it is a composition of logarithm, summation, exponential and linear functions. Therefore, $F(w,v)$ is continuously differentiable since $v$ and $f(w,v)$ are continuously differentiable.  -(a)\\

    Next, we check the condition for implicit function theorem that the $p \times p$ Jacobian matrix $\partial_v F(w,v^{soft,*}_w) := \partial_v F(w,v)|_{v = v^{soft,*}_w}$ is invertible where $\partial_v F(w,v)$ is a matrix $\begin{bmatrix}
    \partial_v (F(w,v)(s_1))\\
    \vdots\\
    \partial_v (F(w,v)(s_p))
\end{bmatrix}\in\mathbb{R}^{p\times p}$.
    
    We have
    \begin{align*}
        \partial_v F(w,v) &= I-\partial_v f(w,v).
    \end{align*}
    From $f(w,v)(s)= \alpha\log\sum_a\exp[\frac{1}{\alpha}(r(s,a)^T w + \gamma\sum_{s'}P(s'|s,a)v(s'))]$, The $s$-th row of Jacobian $\partial_v f(w,v)$ is 
    \begin{align*}
        \partial_v (f(w,v)(s))^T &= \gamma\sum_a \beta_{s,w,v}(a)[P(s_1|s,a),\ldots, P(s_p|s,a)]\\
        \text{where } \beta_{s,w,v}(a)&= 
        \text{softmax}(\frac{1}{\alpha}(r(s,\cdot)^T w + \gamma\sum_{s'}P(s'|s,\cdot)v(s')))(a)\\
        &= \exp[\frac{1}{\alpha}(r(s,a)^T w + \gamma\sum_{s'}P(s'|s,a)v(s'))] / \sum_{a'}\exp[\frac{1}{\alpha}(r(s,a')^T w + \gamma\sum_{s'}P(s'|s,a')v(s'))].
    \end{align*}
    Denote the transition probability vector $[P(s_1|s,a)\cdots P(s_p|s,a)]$ as $P(\cdot|s,a)$.

    Note that $\sum_a\beta_{s,w,v}(a)=1~\forall s,w,v$. Thus, the sum of elements in the $s$-th row of Jacobian $\partial_v f(w,v)$ (i.e. $\gamma \sum_a \beta_{s,w,v}(a)P(\cdot|s,a)$) is $\gamma \sum_{s'}\sum_a \beta_{s,w,v}(a)P(s'|s,a) = \gamma \sum_a \beta_{s,w,v}(a)\sum_{s'}P(s'|s,a)=\gamma,~\forall s,w,v$.\\
    Then, $\partial_v F(w,v^{soft,*}_w) = I-\partial_v f(w,v^{soft,*}_w)=I-\gamma\begin{bmatrix}
        \sum_a \beta_{s_1,w,v^{soft,*}_w}(a)P(\cdot|s_1,a)\\
        \vdots\\
        \sum_a \beta_{s_p,w,v^{soft,*}_w}(a)P(\cdot|s_p,a)
    \end{bmatrix}$ is invertible by Lemma \ref{lemma:Piecewise-linearity}.  -(b)\\

    From (a) and (b), by implicit function theorem, for each $w\in \mathbb{R}^K$ there exists an open set $U\subset\mathbb{R}^K$ containing $w$ such that there exists a unique continuously differentiable function $g:U\to\mathbb{R}^{|S|}$ such that $g(w)=v^{soft,*}_w$ and $F(w',g(w'))=0$, i.e., $g(w')=f(w',g(w'))$ for all $w'\in U$. It means that $g(w')$ is a fixed point of $f(w',\cdot) = \mathcal{T}^{soft,*}_{w'}$ for any $w'\in U$. 

    Since the fixed point of $\mathcal{T}^{soft,*}_{w'}$ is unique, $g(w') = v^{soft,*}_{w'}~\forall w'\in U$. Therefore, $v^{soft,*}_{w'}$ is continuously differentiable in $w'\in U$. Recall that we acquired $g = g_w$ and $U = U_w$ from a given $w \in \mathbb{R}^K$. If we analogously apply this logic for all $w \in \mathbb{R}^K$, we have $g_w(\cdot) = v^{soft,*}_{(\cdot)}$ in $U_w$. Since each $g_w$ is continuously differentiable in $U_w$ and
    $\bigcup_{w}U_w = \mathbb{R}^K$, $v^{soft,*}_{(\cdot)}$ is continuously differentiable on $\mathbb{R}^K$.
\end{proof}

\newpage
\section{Proof of Lipschitz Continuity of $v_w^{soft,*}$} \label{append:lipschitz_continuity}
\subsection{Soft Bellman Operator for Given $w\in\mathbb{R}^K$}\label{subsec:soft_bellman_operator_definition}
Let $|S|=p$. Define the Soft Bellman operator $\mathcal{T}^{soft,*}_w$ for MDP induced by $w\in\mathbb{R}^K$ as follows:
\begin{align}
    &\mathcal{T}^{soft,*}_w:\mathbb{R}^{|S|}\to\mathbb{R}^{|S|} \text{ where} \nonumber\\
    &(\mathcal{T}^{soft,*}_wv)(s) = \alpha\log\Sigma_a\exp\frac{1}{\alpha}(r(s,a)^T w + \gamma\Sigma_{s'}P(s'|s,a)v(s'))~\forall s\in S \text{. In vector form,} \nonumber\\
    &\mathcal{T}^{soft,*}_wv = \begin{bmatrix}
        \alpha\log\Sigma_a\exp\frac{1}{\alpha}[\gamma P(\cdot|s_1,a);r(s_1,a)]^T[v;w]\\
        \vdots\\
        \alpha\log\Sigma_a\exp\frac{1}{\alpha}[\gamma P(\cdot|s_p,a);r(s_p,a)]^T[v;w]
    \end{bmatrix}
\end{align}
Note that $[x;y]$ denotes $[x^T,y^T]^T$, vertical concatenation of column vectors $x,y$. From now, define function $f:\mathbb{R}^K\times\mathbb{R}^{|S|}\to\mathbb{R}^{|S|}$ such that $f(w,v):=\mathcal{T}^{soft,*}_wv$, i.e., $f(w,v)(s):=(\mathcal{T}^{soft,*}_wv)(s)~\forall s\in S$.
\subsection{Properties of Soft Bellman Operator}\label{subsec:soft_bellman_operator_properties}
In this subsection, we summarize some properties of soft Bellman operator.

$f$ is continuously differentiable in $(w,v)$ since it is a composition of logarithm, summation, exponential and linear functions. Note that since the term in the logarithm is a sum of exponential which is always positive, derivative of $f$ is continuous in whole domain. In particular, $f(\cdot,v)$ is differentiable for any $v$.

$\mathcal{T}^{soft,*}_w$ is $\gamma$-contraction for all $w\in\mathbb{R}^K$ in $||\cdot||_\infty$. In terms of $f$, $f(w,\cdot)$ is $\gamma$-contraction for all $w\in\mathbb{R}^K$.\\ 
Formally, $||f(w,v_1)-f(w,v_2)||_\infty\le\gamma||v_1-v_2||_\infty~\forall w\in\mathbb{R}^K,v_1,v_2\in\mathbb{R}^{|S|}$. The following is the proof for contraction property that we show for readability of this section. Similar proof is also shown in \citet{fox2016taming,haarnoja2017sql}.

\begin{proof}
    Let $v_1,v_2\in\mathbb{R}^{|S|}$ and $\epsilon=||v_1-v_2||_\infty$, then
    \begin{align*}
        &f(w,v_1)(s)\\
    =~&\alpha\log\Sigma_a\exp\frac{1}{\alpha}(r(s,a)^Tw+\gamma\mathbb{E}_{s'}[v_1(s')])\\
    \le~ &\alpha\log\Sigma_a\exp\frac{1}{\alpha}(r(s,a)^Tw+\gamma\mathbb{E}_{s'}[v_2(s')+\epsilon])\\
    =~&\gamma\epsilon + \alpha\log\Sigma_a\exp\frac{1}{\alpha}(r(s,a)^Tw+\gamma\mathbb{E}_{s'}[v_2(s')])\\
    =~&\gamma\epsilon + f(w,v_2)(s)~\forall s\in S
    \end{align*}
    Analogously, $f(w,v_2)(s)\le\gamma\epsilon + f(w,v_1)(s)~\forall s\in S$.
    Thus, $||f(w,v_1)-f(w,v_2)||_\infty\le\gamma\epsilon=\gamma||v_1-v_2||_\infty$.
\end{proof}

Thus, $\mathcal{T}^{soft,*}_w$ has the unique fixed point by Banach fixed point theorem. Call this fixed point $v^{soft,*}_w$. By the definition, for any fixed $w$, $f(w,v)$ has unique fixed point $v=v^{soft,*}_w$ i.e., $v^{soft,*}_w = f(w,v^{soft,*}_w)$.

Differentiability of $f(\cdot,v)$ and $\gamma$-contraction of $f(w,\cdot)$ are used for the proof of Lipschitz continuity of $v_w^*$ in function of $w$.

\subsection{Proof of Lipschitz Continuity of Soft Bellman Operator}\label{subsec:soft_bellman_operator_proof}
$\partial_w f(w,v)$ is a matrix $\begin{bmatrix}
    \partial_w (f(w,v)(s_1))\\
    \vdots\\
    \partial_w (f(w,v)(s_p))
\end{bmatrix}\in\mathbb{R}^{p\times K}$. We show that its each row is $L_1$-norm bounded by the maximum norm of reward.

\begin{lemma}\label{lemma:soft_bellman_operator_lemma}
    $||\partial_w (f(w,v)(s))||_1\le \max_{a}||r(s,a)||_1$ $\forall s\in S, w\in\mathbb{R}^K,v\in\mathbb{R}^{p}$
\end{lemma}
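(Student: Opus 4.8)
The statement is a direct consequence of the fact that $\partial_w f(w,v)(s)$ is a convex combination of the reward vectors $\{r(s,a)\}_{a\in A}$, together with convexity of the $\ell_1$ norm. So the plan is: (i) differentiate the log-sum-exp expression for $f(w,v)(s)$ with respect to $w$ and identify the weights as a softmax distribution; (ii) recognize the resulting gradient as lying in the convex hull of $\{r(s,a)\}_a$; (iii) apply the triangle inequality to bound its $\ell_1$ norm by the largest $\|r(s,a)\|_1$.

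First, I would fix $s\in S$, $w\in\mathbb{R}^K$, $v\in\mathbb{R}^p$ and abbreviate $z_a := \tfrac1\alpha\bigl(r(s,a)^T w + \gamma\sum_{s'}P(s'|s,a)v(s')\bigr)$, so that $f(w,v)(s) = \alpha\log\sum_a e^{z_a}$. Differentiating in $w$ and using $\partial_w z_a = \tfrac1\alpha r(s,a)$ gives
\begin{equation}
\partial_w\bigl(f(w,v)(s)\bigr) = \alpha\cdot\frac{\sum_a e^{z_a}\,\partial_w z_a}{\sum_{a'} e^{z_{a'}}} = \sum_a \beta_{s,w,v}(a)\, r(s,a),
\end{equation}
where $\beta_{s,w,v}(a) = e^{z_a}/\sum_{a'}e^{z_{a'}}$ is exactly the softmax weight already appearing in Appendix~\ref{append:continuous_differentiable}. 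In particular $\beta_{s,w,v}(a)\ge 0$ and $\sum_a \beta_{s,w,v}(a)=1$.

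Second, with this representation the bound is immediate: by the triangle inequality for $\|\cdot\|_1$,
\begin{equation}
\bigl\|\partial_w\bigl(f(w,v)(s)\bigr)\bigr\|_1 = \Bigl\|\sum_a \beta_{s,w,v}(a)\, r(s,a)\Bigr\|_1 \le \sum_a \beta_{s,w,v}(a)\,\|r(s,a)\|_1 \le \max_a \|r(s,a)\|_1 ,
\end{equation}
using $\sum_a\beta_{s,w,v}(a)=1$ in the last step. Since $s$, $w$, $v$ were arbitrary, this establishes the lemma.

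\textbf{Main obstacle.} There is essentially no substantive obstacle here; the only thing to be careful about is the bookkeeping in the softmax differentiation (making sure the $\alpha$ factors cancel correctly) and stating clearly that $\beta_{s,w,v}(\cdot)$ is a probability vector so that the convex-combination bound applies. The lemma will then feed into the proof of Theorem~\ref{thm_7}: combining this row-wise $\ell_1$ bound on $\partial_w f$ with the $\gamma$-contraction of $f(w,\cdot)$ in $\|\cdot\|_\infty$ and the implicit differentiation of the fixed-point relation $v^{soft,*}_w = f(w,v^{soft,*}_w)$ yields a uniform Lipschitz constant for $w\mapsto v^{soft,*}_w$ (and hence for $\mathcal{L}^{soft}$), but that assembly is the content of the next subsection rather than of this lemma.
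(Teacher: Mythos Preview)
Your proof is correct and essentially identical to the paper's: both compute $\partial_w f(w,v)(s)$ as the softmax-weighted average $\sum_a \beta_{s,w,v}(a)\,r(s,a)$ and then apply the triangle inequality together with $\sum_a\beta_{s,w,v}(a)=1$. Your added remark on how the lemma feeds into Theorem~\ref{thm_7} via the $\gamma$-contraction and the fixed-point relation also matches the paper's subsequent argument.
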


\begin{proof}
    \begin{align*}
        &||\partial_w f(w,v)(s)||_1\\
        = &||\frac{\partial}{\partial w}\alpha\log\Sigma_a\exp\frac{1}{\alpha}(r(s,a)^Tw+\gamma\Sigma_{s'}P(s'|s,a)v(s'))||_1\\
        = &||\frac{\Sigma_a\exp\frac{1}{\alpha}(r(s,a)^Tw+\gamma\Sigma_{s'}P(s'|s,a)v(s'))\cdot r(s,a)}{\Sigma_a\exp\frac{1}{\alpha}(r(s,a)^Tw+\gamma\Sigma_{s'}P(s'|s,a)v(s'))}||_1
    \end{align*}
    Let $\beta_{s,w,v}(a):=\text{softmax}(\frac{1}{\alpha}(r(s,\cdot)^Tw+\gamma\Sigma_{s'}P(s'|s,\cdot)v(s')))$, then
    \begin{align*}
        ||\partial_w f(w,v)(s)||_1 = &||\Sigma_a \beta_{s,w,v}(a)r(s,a)||_1\\
        \le &\Sigma_a \beta_{s,w,v}(a)||r(s,a)||_1\\
        \le &\max_{a}||r(s,a)||_1~\forall s\in S, w\in\mathbb{R}^K,v\in\mathbb{R}^{|S|}~(\because \Sigma_a\beta_{s,w,v}(a)=1~\forall s,w,v)
    \end{align*}
    Therefore, $||\partial_w f(w,v)(s)||_1\le\max_{a}||r(s,a)||_1~\forall s\in S, w\in\mathbb{R}^K,v\in\mathbb{R}^{p}$.
\end{proof}

\textbf{Theorem \ref{thm_7}}
$v^{soft,*}_w$ is Lipschitz continuous as a function of $w$ on $\mathbb{R}^K$ in $||\cdot||_\infty$.

\begin{proof}
    Let $\epsilon\in\mathbb{R}^K$.
    \begin{align*}
        ||v_{w+\epsilon}^{soft,*}-v_w^{soft,*}||_\infty &= ||f(w+\epsilon,v_{w+\epsilon}^{soft,*})-f(w,v_w^{soft,*})||_\infty \text{ (fixed point)}\\
        &=||f(w+\epsilon,v_{w+\epsilon}^{soft,*})-f(w+\epsilon,v_{w}^{soft,*})+f(w+\epsilon,v_{w}^{soft,*})-f(w,v_w^{soft,*})||_\infty\\
        &\le ||f(w+\epsilon,v_{w+\epsilon}^{soft,*})-f(w+\epsilon,v_{w}^{soft,*})||_\infty + 
        ||f(w+\epsilon,v_{w}^{soft,*})-f(w,v_w^{soft,*})||_\infty\\
        &\le \gamma||v_{w+\epsilon}^{soft,*}-v_{w}^{soft,*}||_\infty + ||\partial_w f(\Tilde{w},v_w^{soft,*})\epsilon||_\infty\text{ for some $\Tilde{w}\in\mathbb{R}^K~~-(*)$ }\\
        &\le \gamma||v_{w+\epsilon}^{soft,*}-v_{w}^{soft,*}||_\infty + \max_{s,a}||r(s,a)||_1||\epsilon||_\infty~~-(**)\\
        \implies &||v_{w+\epsilon}^{soft,*}-v_w^{soft,*}||_\infty\le \frac{\max_{s,a}||r(s,a)||_1}{1-\gamma}||\epsilon||_\infty
    \end{align*}
    In (*), the first term is derived from $\gamma$-contraction of $f(w,\cdot)$, and the second term from Mean Value Theorem under the differentiability of $f(\cdot, v)$.
    Therefore, $v_w^{soft,*}$ is $\frac{\max_{s,a}||r(s,a)||_1}{1-\gamma}$-Lipschitz continuous on $\mathbb{R}^K$. Below is the details for (**).\\
    
    Details for (**):
    \begin{align*}
        &||\partial_w f(\Tilde{w},v_w^{soft,*})\epsilon||_\infty\\
        = &||\begin{bmatrix}
                \partial_w (T^{soft,*}_{\Tilde{w}} v_w^{soft,*}(s_1))^T\epsilon\\
                \vdots\\
                \partial_w (T^{soft,*}_{\Tilde{w}} v_w^{soft,*}(s_p))^T\epsilon
            \end{bmatrix}||_\infty\\
        = &\max_{s}|\partial_w T^{soft,*}_{\Tilde{w}} v_w^{soft,*}(s)^T\epsilon|\\
        \le &\max_{s}||\partial_w T^{soft,*}_{\Tilde{w}} v_w^{soft,*}(s)||_1||\epsilon||_\infty~\text{(H\"{o}lder inequality)}\\
        \le &\max_{s,a}||r(s,a)||_1||\epsilon||_\infty~\text{(Lemma \ref{lemma:soft_bellman_operator_lemma})}.
    \end{align*}
\end{proof}

\newpage
\section{ Implementation Details and Additional Experiments } \label{append:impement_details}
\subsection{Traffic Light Control Environment}\label{subappend:environment}

We consider the traffic light control simulation environment \cite{sumorl,alegre21traffic1}, illustrated in Fig. \ref{fig:traffic_overview}. The intersection comprises four road directions (North, South, East, West), each consisting of four inbound and four outbound lanes. We configured the traffic flow to be asymmetric, with a fourfold higher influx of cars from the North and South compared to those from the East and West. We generated a corresponding route file using code provided by \citet{sumorl}. There are four available traffic light phases: (i) Straight and Turn Right from North-South, (ii) Turn Left from North-South, (iii) Straight and Turn Right from East-West, and (iv) Turn Left from East-West. 

At each time step, the agent receives a thirty-seven-dimensional state containing a one-hot vector indicating the current traffic light phase, the number of vehicles for each incoming lane, and the number of vehicles with a speed of less than 0.1 meter/second for each lane. The initial state is a one-hot vector with the first element one. Given the current state, the traffic light controller selects the next traffic light phase as its action. The simulation time between actions is 30 seconds, and each episode lasts for 9000 seconds, equivalent to 300 timesteps. If the current phase and the next phase (the current action) are different, the last 4 seconds of the 30-second interval transition to the yellow light phase to prevent collisions among vehicles. The reward at each timestep is a four-dimensional vector, with each dimension representing a quantity proportional to the negative of the total waiting time for cars on each road. The total number of timesteps in the simulation is set to 100,000.

\subsection{Implementation in the Traffic Environment}\label{subappend:implement}

We modified the implementation code of sumo-rl \cite{sumorl}, which primarily relies on stable-baselines3 \cite{raffin21stablebaselines}, a widely used reinforcement learning framework built on PyTorch \cite{paszke19pytorch}. For comparison with our value-based method, we consider the following value-based baselines: (i) Utilitarian, which is a standard Deep Q-learning (DQN) \cite{mnih13} using averaged rewards $\frac{1}{K} \sum_{k=1}^K r^{(k)}$, and (ii) Fair Min-DQN (MDQN), an extension of the Fair-DQN concept \cite{siddique20fair} to the max-min fair metric maximizing $\mathbb{E}[\min_{1 \leq k \leq K} \sum_t \gamma^t r^{(k)}_t]$. 

For both the proposed method and the baselines, we set $\gamma=0.99$ and the buffer size $|\mathcal{M}|=50,000$. All three methods employ a Q-network with an input dimension of 37 (state dimension), two hidden layers of size 64, and two ReLU activations after each hidden layer. The output layer size for the proposed method and Utilitarian is 4, corresponding to the action size. For MDQN, the output layer size is 16 ($4 \times 4$), representing the action size multiplied by the reward dimension size. We utilize the Adam optimizer \cite{kingma14adam} to optimize the loss function, with a learning rate of 0.001 and minibatch size 32. For the baselines, we use $\epsilon$-greedy exploration with linear decay from $\epsilon=1.0$ to 0.01 for the initial 10,000 timesteps. The interval of each target network is 500 timesteps.

The proposed method adopts the Soft Q-learning (SQL) conducted as follows given $w \in \Delta^K$:
\begin{equation}\label{eq:SQL_update}
    \min_\phi \frac{1}{|\mathcal{B}|} \sum_{(s,a,r,s') \in \mathcal{B} \subset \mathcal{M}} \bigg( \sum_{k=1}^K w_k r^{(k)}(s,a) + \gamma \alpha \log \sum_{a'} \exp \left( \frac{\hat{Q}_{\bar{\phi}}(s',a')}{\alpha} \right) - \hat{Q}_{\phi}(s,a) \bigg)^2
\end{equation}
where the soft Q-network $\hat{Q}$ is parameterized by $\phi$, $\bar{\phi}$ is the target parameter, and $\mathcal{B}$ is a minibatch. With $\hat{Q}_{w^m, \text{main}} = \hat{Q}_{\phi^m}$ and weight $w^m$ at the $m$-th step, SQL update is performed with $\alpha=0.1$ throughout all timesteps, followed by soft target update of ratio $\tau=0.001$ in $\bar{\phi} \leftarrow \tau \phi^{m+1} + (1-\tau) \bar{\phi}$. We use an exploration strategy for the current policy $\text{softmax}_a  \{ \hat{Q}_{\phi}(s,a) / \alpha_{\text{act}}  \}$ with linear decay from $\alpha_{\text{act}}=5.0$ to $0.1$ for the initial 10,000 timesteps. The weight vector $w$ is uniformly initialized across dimensions (Line 2 in Algorithm \ref{alg:maxent_maxmin}) and kept fixed for the first 50 timesteps, with one gradient step of \eqref{eq:SQL_update} per timestep (Line 3).

After 50 timesteps, we generate $N=20$ perturbed weights with $\mu=0.01$ (Line 6). Each $\hat{Q}_{w^m, \text{copy}, n}$ is updated using soft Q-learning with $w^m + \mu u^{m}_n$, employing common samples from $\mathcal{M}$ of size 32 (Lines 7-8). The target soft Q-network for each $\hat{Q}_{w^m, \text{copy}, n}$ is a copy of the current main target soft Q-network. As mentioned in Section \ref{sec:model_free}, we perform one step of gradient update for SQL for each copy with a learning rate of 0.001. Thus, the overall complexity of the proposed algorithm is at the level of SQL with slight increase due to linear regression at each step.

After the linear regression (Lines 9-11), we update the current weight $w^m$ using projected gradient descent, employing the technique from \cite{DBLP:journals/corr/WangC13a} (Line 12). The initial learning rate of the weight $w$ is set to $l_0 = 0.01$, and we employ inverse square root scheduling \cite{nesterov2017random} (Line 13). For the main Q-network update with the updated weight $w^{m+1}$, we perform 3 gradient steps per timestep to incorporate the new weight information (Line 14).

In MDQN, a vector-valued Q-network $Q_\theta(s,a) \in \mathbb{R}^K$ parameterized by $\theta$ is trained by $\min_\theta \mathbb{E}_{(s,a,r,s') \sim \mathcal{M}} \bigg[ \| r + \gamma \bar{Q}(s', \arg \max_{a'} ~ \min_{1 \leq k \leq K}[r^{(k)} + \gamma \bar{Q}^{(k)}(s',a')]  ) - Q_\theta(s,a)   \|^2  \bigg]$ where $\bar{Q}(s', a'  ) \in \mathbb{R}^K$ is a vector-valued target function. Here, the minimum of $r + \gamma \bar{Q}(s', \arg \max_{a'} ~ \min_{1 \leq k \leq K}[r^{(k)} + \gamma \bar{Q}^{(k)}(s',a')]  )$ over $K$ dimension is $\max_{a'} ~ \min_{1 \leq k \leq K}[r^{(k)} + \gamma \bar{Q}^{(k)}(s',a')]$. If $Q_\theta(s,a)$ approaches $r + \gamma \bar{Q}(s', \arg \max_{a'} ~ \min_{1 \leq k \leq K}[r^{(k)} + \gamma \bar{Q}^{(k)}(s',a')]  )$, then $\min_{1 \leq k \leq K} Q^{(k)}_\theta(s,a)$ approaches $\max_{a'} ~ \min_{1 \leq k \leq K}[r^{(k)} + \gamma \bar{Q}^{(k)}(s',a')]$. This implies that MDQN aims to maximize $\mathbb{E}_{(s,a)}[\min_{1 \leq k \leq K} Q^{(k)}_\theta(s,a)]$. Action selection is performed by $\arg \max_a \min_{1 \leq k \leq K} Q^{(k)}_\theta(s,a), \forall s$. Note that MDQN is reduced to the standard DQN with scalar reward when we set $K=1$. MDQN is related to optimizing $\mathbb{E}_\pi \left[ \min_k ( \sum_{t=0}^\infty \gamma^t {r}^{(k)}_t ) \right]$, rather than $\min(J(\pi)) = \min_k ( \mathbb{E}_\pi \left[ \sum_{t=0}^\infty \gamma^t {r}^{(k)}_t \right] )$. In contrast, we propose a value-based method that explicitly optimizes $\min (J(\pi))$. 

We used a hardware of Intel Core i9-10900X CPU @ 3.70GHz.

\subsection{Additional Experiments in Species Conservation}\label{subappend:additional_experiments}

We conducted additional experiments to further support  our  method.  We considered Species Conservation \cite{siddique23fair}, another widely used MORL environment. The agent goal is to take appropriate actions to balance the population of two species: the endangered sea otters and their prey, and  the elements of two-dimensional reward vector represent  quantities of the current predators (sea otters) and prey. We ran algorithms for 100,000 timesteps in this environment, as in the other two environments in Section \ref{subsec:performance}, and the metric is calculated over the 32 most recent episodes.

As shown in Table \ref{tab:Experiment_conservative_species_common}, the proposed method demonstrates superior max-min performance and achieves the most balanced outcomes. The return vector of conventional MDQN is Pareto-dominated by that of our algorithm, and our approach outperforms Utilitarian in terms of max-min fairness. Note that the Utilitarian approach, i.e., sum return maximization, yields extreme unbalance between Returns 1 and 2. We used tanh activation for our policy network.

\begin{table}[ht]
\centering
\begin{tabular}{|c|c|c|c|}
\hline
 & Return 1 & Return 2  & Min Return \\
\hline
Proposed & 27 & 38 & \textbf{27} \\
MDQN & 22 & 29 & 22 \\
Utilitarian & 4 & 87 & 4 \\
\hline
\end{tabular}
\caption{Performance in Species Conservation environment. }
\label{tab:Experiment_conservative_species_common}
\end{table}

\newpage
\subsection{Additional Analysis on Computation}\label{subappend:additional_ablation}

Our model-free algorithm does not increase complexity severely from existing soft value iterations.  As seen, our algorithm is composed of (a) weight $w$ update and  (b) soft Q value update with given $w$.  Step (b) is simply the conventional soft Q value update. Step (a) can be implemented efficiently by performing only \textbf{one} step of gradient update for Soft Q-learning  for each copy $\hat{Q}_{w^m, \text{copy}, n}$ in Line 8 of Algorithm \ref{alg:maxent_maxmin}, using common samples for updating each copy with Adam optimizer \cite{kingma14adam} in PyTorch, a common deep learning library.  Note that $N=20$ copies are sufficient as seen in Fig. \ref{fig:ablation_bar}.

We compared the runtime of our algorithm with that of simple soft Q-value iteration without the $w$ weight learning part. We considered two environments: the traffic control environment discussed in the paper and species conservation \cite{siddique23fair}, a newly introduced environment elaborated below in the more experimental results part. These computations were conducted on hardware equipped with an Intel Core i9-10900X CPU @ 3.70GHz. Our algorithm utilizes $N=20$ copies. As seen in Table \ref{tab:Experiment_runtime_complexity}, the runtime ratio is much lower than the value $N=20$ for both environments. In the case of traffic control, the increase in runtime is not significant. 

In the traffic control environment, we also computed  the average elapsed time per linear regression step, averaging over 500 steps. As shown in Table \ref{tab:regression}, the computation of linear regression scales efficiently for large values of $N$.

\begin{table}[ht]
\centering
\begin{tabular}{|c|c|c|c|}
\hline
  & Proposed algorithm & Soft value iteration without  weight learning  & Ratio \\
\hline
Species conservation & 6.7 & 1.3 & 5.1 \\
Traffic control & 65.4 & 60.3 & 1.1 \\
\hline
\end{tabular}
\caption{Average total runtime per episode in seconds. Each episode consists of 300 timesteps.}
\label{tab:Experiment_runtime_complexity}
\end{table}

\begin{table}[ht]
    \centering
    \begin{tabularx}{\textwidth}{X|XXXX}
        \toprule
        \textbf{$N$} & \textbf{5} & \textbf{10} & \textbf{20} & \textbf{30} \\
        \midrule
        Elapsed time (s) & $1.18 \times 10^{-4}$ & $1.22 \times 10^{-4}$ & $1.26 \times 10^{-4}$ & $1.35 \times 10^{-4}$ \\
        \bottomrule
    \end{tabularx}
    \caption{Elapsed time per one linear regression step in traffic control environment.}
    \label{tab:regression}
\end{table}

\newpage
\section{ Glossary }
\begin{table}[!ht]
	\centering
	\begin{adjustbox}{width=\textwidth}
		\begin{tabular}{ll}
			\toprule
			Notations & Descriptions   \\
			\hline
			$S$ & State space \\
			$A$ & Action space  \\
			$P$ & Transition dynamics  \\
			$\mu_0$  & Initial state distribution \\
			$r$ & Reward vector in MOMDP  \\
                $K$ & Dimension of reward vector \\
            $r^{(k)}$ & $k$-th coordinate of vector reward $r$, $1\le k\le K$ \\
			$\gamma$ & Discount factor \\
			$p$ & Number of states, i.e. $|S|$\\
			$q$  & Number of actions, i.e. $|A|$   \\
			$\pi,\Pi$ & Policy, policy space \\
			$J(\pi)$ & Value vector under policy $\pi$ in MOMDP \\
            $J_k(\pi)$ & $k$-th coordinate of value vector under policy $\pi$ in MOMDP, $1\le k\le K$ \\
			$\Delta^K$ & $(K-1)$-Simplex, i.e., $\{w\in\mathbb{R}^K|\sum_{k=1}^K w_k =1, w_k\ge0,\forall 1\le k\le K\}$ \\
            $d(s,a)$ & State-action visitation frequency \\
			$\pi^d(a|s)$ & Stationary policy induced by $d$  \\
			$w^{op}_{LP},v^{op}_{LP}$ & Optimal solution of $\mathbf{P1}$  \\
                $w^*$ & Optimal solution of $\mathbf{P2}$ \\
			$T^*_w$ & Bellman optimality operator or in (single objective) MDP $<S,A,P,\mu_0,\sum_{k=1}^K w_k r^{(k)},\gamma>$   \\
            $\mathcal{T}^{soft,*}_w$ & Soft Bellman optimality operator in (single objective) MDP $<S,A,P,\mu_0,\sum_{k=1}^K w_k r^{(k)},\gamma>$ \\
			$v^*_w\in\mathbb{R}^{|S|}$ & Fixed point of $T^*_w$. Also, optimal state value function of (single objective) MDP $<S,A,P,\mu_0,\sum_{k=1}^K w_k r^{(k)},\gamma>$\\
                $v^{soft,*}_w\in\mathbb{R}^{|S|}$ & Fixed point of $\mathcal{T}^{soft,*}_w$.  Also, optimal soft value function of (single objective) MDP $<S,A,P,\mu_0,\sum_{k=1}^K w_k r^{(k)},\gamma>$\\
                $Q^*_w$ & Optimal action value function of (single objective) MDP $<S,A,P,\mu_0,\sum_{k=1}^K w_k r^{(k)},\gamma>$ \\
                $Q^{soft,*}_w$ & Optimal soft action value function of (single objective) MDP $<S,A,P,\mu_0,\sum_{k=1}^K w_k r^{(k)},\gamma>$ \\
			$\mathcal{L}(w)$ & Optimal value function averaged by initial states, i.e., $\sum_{s}\mu_0(s)v^*_w(s)$\\
                $\mathcal{L}^{soft}(w)$ & Optimal soft value function averaged by initial states, i.e., $\sum_{s}\mu_0(s)v^{soft,*}_w(s)$ \\
                $N$ & Number of perturbations\\
			\bottomrule
		\end{tabular}
	\end{adjustbox}
	\caption{ Used notations in the main paper  }
	\label{tab:notations}
\end{table}


\end{document}